\def\eqref#1{equation~\ref{#1}}
\def\1{\bm{1}}
\DeclareMathAlphabet{\mathsfit}{\encodingdefault}{\sfdefault}{m}{sl}
\SetMathAlphabet{\mathsfit}{bold}{\encodingdefault}{\sfdefault}{bx}{n}
\newcommand{\indep}{\perp \!\!\! \perp}
\newcommand{\uni}[2]{\mathrm{Uni}({#1{:}#2})}
\newcommand{\rdn}[2]{\mathrm{Red}({#1{:}#2})}
\newcommand{\syn}[2]{\mathrm{Syn}({#1{:}#2})}
\newcommand{\mut}[2]{\mathrm{I}({#1;#2})}
\newcommand{\mutd}[3]{\mathrm{I}_{#1}({#2;#3})}
\newcommand{\iid}[0]{i.i.d.}
\newcommand{\given}[0]{|}
\newtheorem{lem}{Lemma}
\newtheorem{thm}{Theorem}
\newtheorem{defn}{Definition}
\newtheorem{prop}{Proposition}
\newtheorem{candmeas}{Candidate Measure}
\theoremstyle{plain}
\theoremstyle{definition}
\theoremstyle{remark}
\title{Towards Formalizing Spuriousness of Biased Datasets \\ Using Partial Information Decomposition}
\author{\name Barproda Halder \email bhalder@umd.edu \\
      \addr Department of Electrical and Computer Engineering\\ University of Maryland, College Park
      \AND
      \name Faisal Hamman \email fhamman@umd.edu \\
      \addr Department of Electrical and Computer Engineering\\ University of Maryland, College Park
      \AND
      \name Pasan Dissanayake \email pasand@umd.edu\\
      \addr Department of Electrical and Computer Engineering\\ University of Maryland, College Park
      \AND
      \name Qiuyi Zhang \email qiuyiz@google.com\\
      \addr Google Research
      \AND
      \name Ilia Sucholutsky \email is2961@princeton.edu\\
      \addr Department of Computer Science\\ Princeton University
      \AND
      \name Sanghamitra Dutta \email sanghamd@umd.edu\\
      \addr Department of Electrical and Computer Engineering\\ University of Maryland, College Park
      \AND
      }
\begin{document}

\maketitle

\begin{abstract}
Spuriousness arises when there is an association between two or more variables in a dataset that are not causally related. In this work, we propose an explainability framework to preemptively disentangle the nature of such spurious associations in a dataset before model training. We leverage a body of work in information theory called Partial Information Decomposition (PID) to decompose the total information about the target into four non-negative quantities, namely
\emph{unique information (in core and spurious features, respectively), redundant information, and synergistic information.} Our framework helps anticipate when the core or spurious feature is indispensable, when either suffices, and when both are jointly needed for an optimal classifier trained on the dataset. Next, we leverage this decomposition to propose a novel measure of the spuriousness of a dataset. We arrive at this measure systematically by examining several candidate measures, and demonstrating what they capture and miss through intuitive canonical examples and counterexamples. Our framework \emph{Spurious Disentangler} consists of segmentation, dimensionality reduction, and estimation modules, with capabilities to specifically handle high-dimensional image data efficiently. Finally, we also perform empirical evaluation to demonstrate the trends of unique, redundant, and synergistic information, as well as our proposed spuriousness measure across $6$ benchmark datasets under various experimental settings. We observe an agreement between our preemptive measure of dataset spuriousness and post-training model generalization metrics such as worst-group accuracy, further supporting our proposition. The code is available at \url{https://github.com/Barproda/spuriousness-disentangler}.
\end{abstract}

\section{Introduction}
\label{sec:Introduction}
\begin{wrapfigure}[11]{r}{0.21\textwidth}
\vspace{-11pt}
\includegraphics[width=0.21\textwidth]{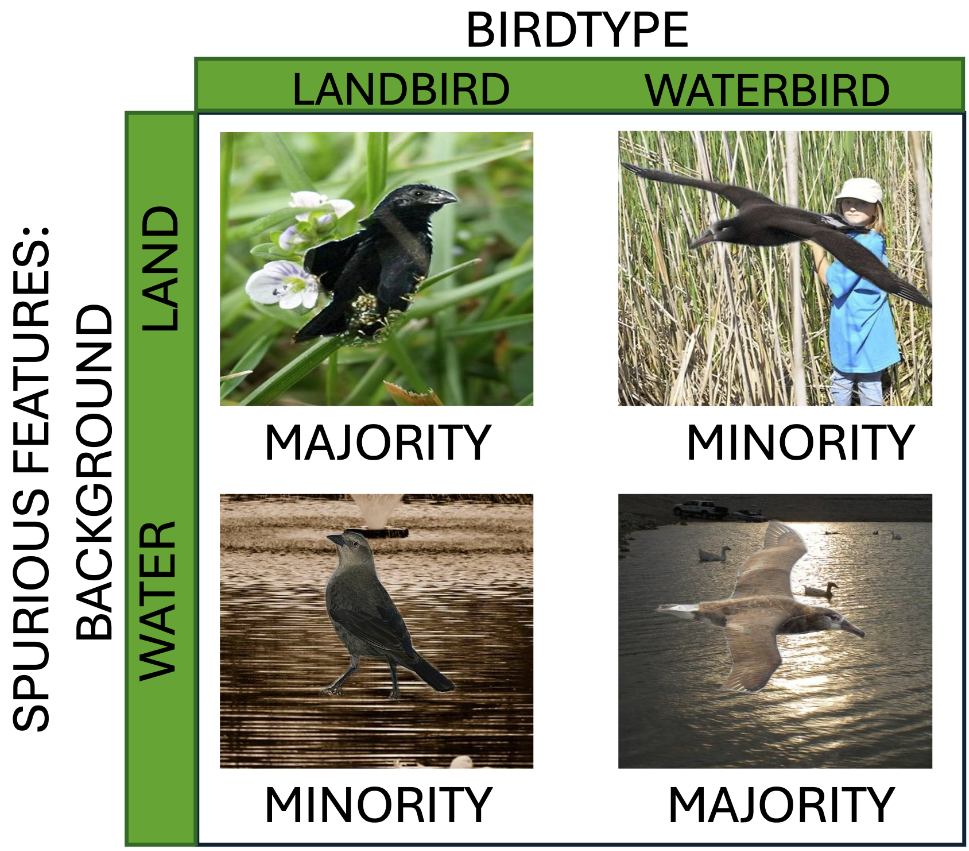}
   \caption{Spuriousness in Waterbird dataset due to sampling bias.\label{fig:spurious_correlation_motivation}} 
\end{wrapfigure}

The success of machine learning is heavily determined by the quality of datasets used for training or fine-tuning. Spurious patterns~\citep{haig2003spurious} arise when two or more variables are associated in a dataset even though they do not have a causal relation, e.g., image classifiers on Waterbird dataset~\citep{WahCUB_200_2011} learn to use the background rather than the foreground for classification, because most waterbirds are photographed on a water background (see Fig.~\ref{fig:spurious_correlation_motivation}). This pattern in the dataset misleads a classifier into learning an undesirable spurious link between the target label (bird type) and background (``spurious'' feature) as opposed to the foreground (core feature). Spuriousness in datasets may result in deceptively high performance on in-distribution datasets but significantly hinders generalization on out-of-distribution datasets, e.g., accuracy on minority groups like waterbirds with land background is low~\citep{lynch2023spawrious,sagawa2019distributionally,puli2023don}.

Despite advances in dataset-based and model-training-based approaches to mitigate such spurious patterns~\citep{ye2024spurious,srivastava2023addressing,ghouse2024understanding}, the notion of spuriousness in any given dataset has classically lacked a formal definition. To address this gap, in this work, we ask the following question: \emph{Given a dataset and a split of core and spurious features, can we preemptively quantify the spuriousness of the dataset before training?} In essence, our goal is to arrive at a framework that would help anticipate the feature preferences of an optimal classifier prior to training. 

To this end, we provide an information-theoretic explainability framework to disentangle the nature of spurious associations in a dataset, i.e., how the information about the target variable is distributed among the spurious and core features. We leverage a body of work in information theory called Partial Information Decomposition (PID)~\citep{bertschinger2014quantifying,banerjee2018computing}, which has its roots in statistical decision theory. We note that classical information-theoretic measures such as mutual information~\citep{cover2012elements} capture the entire statistical dependency between two random variables but fail to capture how this dependency is distributed among those variables, i.e., the structure of the multivariate information. Partial Information Decomposition (PID) addresses this nuanced issue by providing a formal way of \emph{disentangling} the joint information content between the core and spurious features into \emph{unique, redundant, or synergistic information}. We leverage this decomposition to systematically arrive at a novel measure of dataset spuriousness with empirical evaluation on high-dimensional image datasets. This
work provides a more nuanced understanding of the interplay between spurious and core features in a dataset that can better inform dataset quality assessment. Our main contributions can be summarized as follows:

\noindent \textbf{Unraveling nature of spurious associations leveraging PID:} We leverage PID to disentangle the total information about a target ($Y$) in the core ($F$) and spurious ($B$) features into four non-negative terms: \emph{unique information (in core and spurious features respectively), redundant information, and synergistic information} (see Proposition~\ref{thm:total_info}). We elucidate four types of statistical dependencies captured by the PID terms (see Fig.~\ref{fig:canonical}), providing preemptive insights on when an optimal classifier might find a spurious feature more informative or useful than the core features. We establish how unique information quantifies the informativeness of a feature over another for predicting $Y$ (see Theorem~\ref{thm:uniq_info}). Then, redundant information turns out to be the common information that can be obtained from either the spurious or core features, allowing a classifier to potentially choose either without any preference. An interesting term is synergy that captures scenarios when both spurious and core features are jointly informative about the target $Y$ but not individually (classifier likely to use both spurious and core).

\textbf{Novel measure of dataset spuriousness:}
Though many works attempt to prevent a model from learning spurious patterns, there is limited theoretical understanding of how to quantify the spuriousness of a dataset given a choice of core and spurious features. In this work, we leverage PID to propose a novel measure of the undesirable spuriousness of a dataset ($M_{sp}$) that steers predictors into choosing the spurious features over the core (see Proposition~\ref{thm:measure}). We arrive at this measure systematically by examining several candidate measures, and demonstrating what they capture and miss through intuitive canonical examples and counterexamples. Our measure provides a fundamental understanding of feature informativeness for a classification task, enabling dataset quality assessment and interpretability.

\textbf{Spuriousness Disentangler:} We propose an autoencoder-based explainability framework that we call -- Spuriousness Disentangler -- to obtain the four PID values as well as our spuriousness measure $M_{sp}$ for high-dimensional image data. The framework consists of three modules: (i) Segmentation: This module performs segmentation to separate the foreground (core features $F$) and background (spurious features $B$) for every image, either using pre-trained semantic segmentation models~\citep{FPN} or CLIPSeg~\citep{clipseg}, which is an Open-Vocabulary Semantic Segmentation model if necessary; (ii) Dimensionality Reduction: An autoencoder converts high-dimensional images into lower-dimensional, discrete feature representations. The dimensionality reduction and clustering are performed jointly through minimization of a joint loss function, drawing inspiration from \citet{guo2017deep}. (iii) Estimation: The final step includes the estimation of the joint probability distribution of the acquired lower-dimensional representation, followed by optimization~\citep{dit,liang2023quantifying} to compute PID values and $M_{sp}$. %Code is available at \url{https://github.com/Barproda/spuriousness-disentangler}.

%need to edit to include cmnist
\textbf{Empirical results:}
Since our proposed framework is a preemptive dataset explainability framework, the goal of our experiments is to show broad agreement between our anticipations from the dataset before training and the post-training behavior of the models for various experimental setups. We examine four experimental setups: i) Both core
and spurious features are available; (ii) Either core or spurious is available; (iii) Segmentation to obtain
core and spurious features; and (iv) Non-spatial spuriousness. Our evaluation spans six datasets: Waterbird~\citep{WahCUB_200_2011}, Adult~\citep{adult_2}, CelebA~\citep{CelebAMask-HQ}, Dominoes~\citep{shah2020pitfalls}, Spawrious~\citep{lynch2023spawrious}, and Colored MNIST~\citep{arjovsky2019invariant}. We observe a negative correlation between our proposed measure of dataset spuriousness $M_{sp}$ and post-training model generalization metrics, such as the worst-group accuracy for each experimental setting. We also study Grad-CAM~\citep{selvaraju2017grad} visualizations and intersection-over-union (IoU) metric~\citep{iou} to further confirm which features are actually being emphasized by the model.

A framework for dataset explainability provides an alternative to combating spuriousness during training by providing preemptive insights to inform the training process (analogous to ``nutrition labels''~\cite{yang2018nutritional} or ``datasheets for datasets''~\cite{gebru2021datasheets}). By enabling dataset quality check and cleansing prior to training, it can bypass expensive adversarial training, often used to avoid spurious patterns.  Having clean datasets for fine-tuning is particularly valuable in the era of large foundation models when one has limited control over the training process.

\noindent \textbf{Related Works:} 
There are several perspectives on spurious correlation (see \citet{haig2003spurious,kirichenko2022last,onfeaturelearning,DiscoverandCure,FreezethenTrain,logitcorrection,stromberg2024robustness,singla2021salient,moayeri2023spuriosity,lynch2023spawrious} and the references therein; also see surveys~\citet{ye2024spurious,srivastava2023addressing,ghouse2024understanding}). Spuriousness mitigation techniques are broadly divided into two groups: (i) Dataset-based techniques~\citep{goel2020model,kirichenko2022last,DiscoverandCure,moayeri2023spuriosity,liu2021just} and (ii) Learning-based techniques~\citep{logitcorrection,yang2023understanding,FreezethenTrain,zhang2022correct}. Among dataset-based techniques, \citet{kirichenko2022last} shows that last-layer fine-tuning of a pre-trained model with a group-balanced subset of data is sufficient to mitigate spuriousness. \citet{DiscoverandCure} proposes a concept-aware spurious correlation mitigation technique. There are also some works that try to separate spurious and core features in the feature space of deep neural networks using external feedback~\citep{sohoni2020no,kattakinda2022invariant}. Recent work~\citet{key_factors2024} looks into the problem through the mathematical lens of separability of the spurious and core features under a mixture of Gaussian assumptions (also assuming a split between core and spurious).  \citet{FreezethenTrain} discusses how the noise in the core feature plays a role in a model’s reliance on it. Our novelty lies in investigating the problem of spurious patterns through the lens of PID, rooted in statistical decision theory, focusing on quantifying the spuriousness of a dataset for interpretability and quality assessment. Our work isolates four specific types of statistical dependencies in the dataset, providing a more nuanced understanding (see Fig.~\ref{fig:canonical}) going beyond solely identifying a model’s reliance on a specific feature. 

Partial Information Decomposition (PID)~\citep{williams2010nonnegative,bertschinger2014quantifying} is an active area of research, beginning to be used in different domains of neuroscience and machine learning~\citep{tax2017partial,dutta2020information,hamman2023demystifying,ehrlich2022partial,liang2023multimodal,wollstadt2023rigorous,mohamadi2023more,venkatesh2024gaussian,dutta2021fairness,dissanayakequantifying}. We also refer to a survey~\citep{dutta2023review}. However, interpreting spuriousness in datasets through the lens of PID is unexplored. Additionally, there is limited work on calculating PID values for high-dimensional multivariate continuous data. Some existing works~\citep{dutta2021fairness,venkatesh2024gaussian} handle continuous data with Gaussian assumptions while~\citep{pakman2021estimating} considers one-dimensional multivariate case. Hence, estimating PID for high-dimensional data through proper dimensionality reduction and discretization is also fairly open. For dimensionality reduction, different learning based methods exist~\citep{hotelling1933analysis,law2006incremental,lee2005nonlinear,wang2015dimensionality,6910027,sadeghi2023deep}. Similarly, for discretization, different clustering algorithms exist, e.g., k-means clustering~\citep{macqueen1967some,bradley2000constrained}, deep embedded clustering~\citep{xie2016unsupervised}. In this work, we train an autoencoder to jointly learn a good lower-dimensional representation of the input image data in a self-supervised manner (with additional bottleneck structure) while also clustering simultaneously to deal with the challenge of high-dimensional real-valued image data.

\section{Preliminaries} Let $X=(X_1,X_2,\ldots,X_d)$ be the random variable denoting the input (e.g., an image) where each $X_i\in \mathcal{X}$ denotes a finite set of values that each feature can take. The core features (e.g., the foreground) will be denoted by $F\subseteq X$, and the spurious features (e.g., the background) will be denoted by $B = X\backslash F$. We typically use the notation  $\mathcal{B}$ and $\mathcal{F}$ to denote the range of values for the spurious and core features. Let $Y$ denote the target random variable, e.g., the true labels which lie in the set $\mathcal{Y}$, and the model predictions are given by $\hat{Y}=f_{\theta}(X)$ (parameterized by $\theta$). 
Generally, we use the notation $P_A$ to denote the distribution of random variable $A$, and $P_{A|B}$ to denote the conditional distribution of random variable $A$ conditioned on $B$. Depending on the context, we also use more than one random variable as subscript, e.g., $P_{ABY}$ denotes the joint distribution of $(A,B,Y)$. Whenever necessary, we also use the notation $Q_{A}$ to denote an alternate distribution on the random variable $A$ that is different from $P_A$. We also use the notation $P_{A|B} \circ P_{B|C}$ to denote a composition of two conditional distributions given by:
$
P_{A|B} \circ P_{B|C} (a|c) = \sum_{b \in \mathcal{B}} P_{A|B}(a|b)P_{B|C}(b|c) \ \forall a \in \mathcal{A}, \ c \in \mathcal{C},
$ where $\mathcal{A}$, $\mathcal{B}$, and $\mathcal{C}$ denote the range of values that can be taken by random variables $A$, $B$, and $C$. 

%\label{sec:PID}
\textbf{Background on PID:} We provide a brief background on PID that would be relevant for the rest of the paper (also see Fig.~\ref{fig:pid_basic}). The classical information-theoretic quantification of the total information that two random variables $A$ and $B$ together hold about $Y$ is given by mutual information $\mut{Y}{A, B}$ (see~\citep{cover2012elements} for a background on mutual information). Mutual information $\mut{Y}{A, B}$ is defined as the KL divergence~\citep{cover2012elements} between the joint distribution $P_{YAB}$ and the product of the marginal distributions $P_Y \otimes P_{AB}$ and would go to zero if and only if $(A,B)$ is independent of $Y$. \emph{Intuitively, this mutual information captures the total predictive power about $Y$ that is present jointly in $(A,B)$ together, i.e., how well one can learn $Y$ from $(A,B)$ together.} However, $\mut{Y}{A, B}$ only captures the total information content about $Y$ jointly in $(A,B)$ and does not unravel what is unique or shared between $A$ and $B$. 

\begin{wrapfigure}[10]{r}{0.25\textwidth}
\vspace{-15pt}
\includegraphics[width=0.25\textwidth]{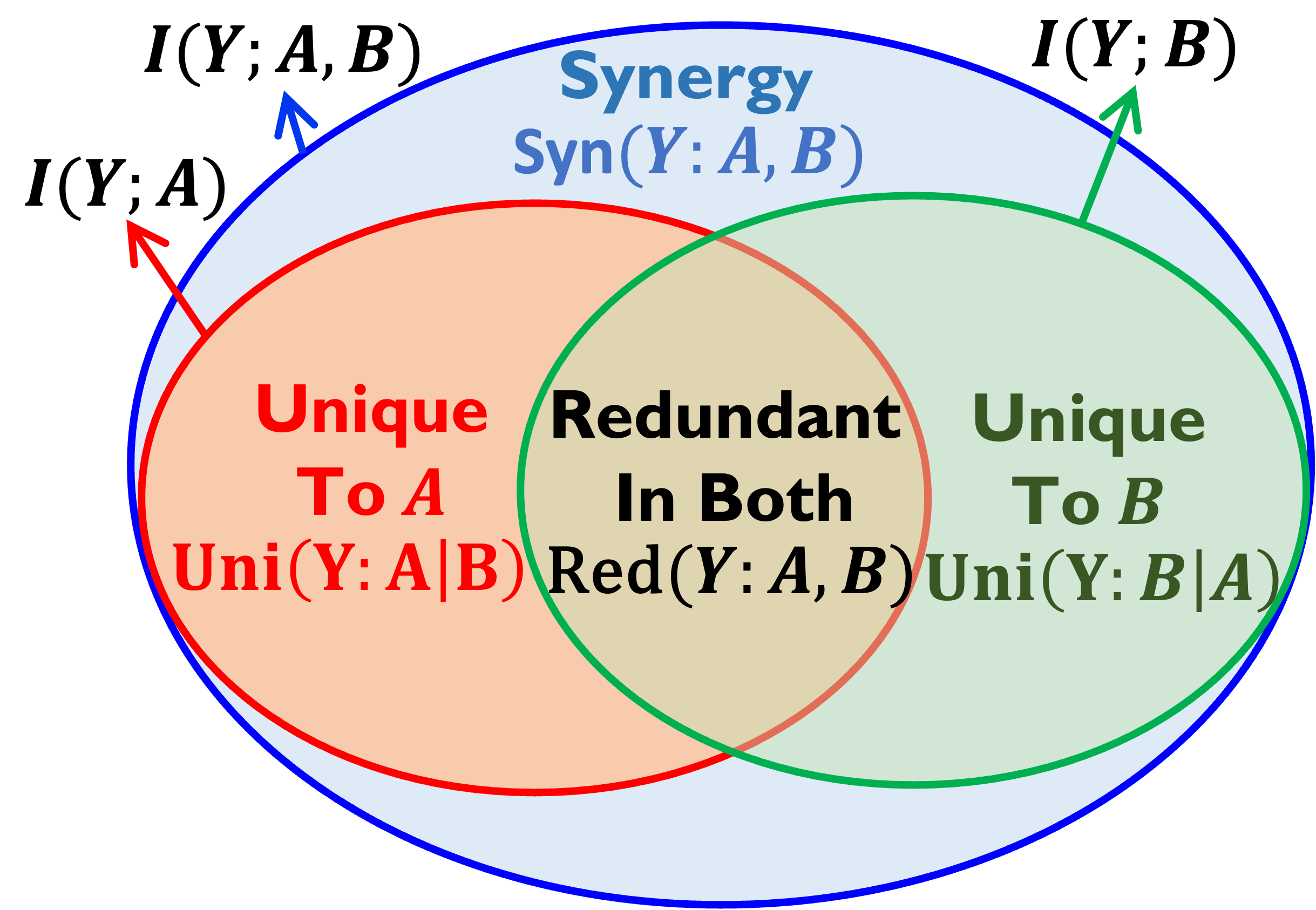}
   \caption{$\mut{Y}{A, B}$ is decomposed into four non-negative terms.\label{fig:pid_basic}}
\end{wrapfigure}

PID~\citep{bertschinger2014quantifying} provides a mathematical framework that decomposes the total information content $\mut{Y}{A, B}$ into four non-negative terms:
\begin{align*} \label{eq:PID}
\mut{Y}{A,B} = \uni{Y}{B\given A} + \uni{Y}{A\given B} &\\ + 
\rdn{Y}{A,B}+ \syn{Y}{A,B}.
\end{align*}
Here, $\uni{Y}{A\given B}$ denotes the \emph{unique information} about $Y$ that is only in $A$ but not in $B$ and $\uni{Y}{B\given A}$ denotes the \emph{unique information} about $Y$ that is only in $B$ but not in $A$. Next, $\rdn{Y}{A,B}$ denotes redundant information (common knowledge) about $Y$ in both $A$ and $B$. Lastly, $\syn{Y}{A,B}$ is an interesting term that denotes the synergistic information that is present only jointly in ${A,B}$ but not in any one of them individually, e.g., a public and private key can jointly reveal information not in any of them alone.

\noindent \textit{Example to Understand PID:} Let $Z{=}(Z_1,Z_2,Z_3)$ with each $Z_i{\sim}$ \iid{} Bern(1/2). Let $A=(Z_1,Z_2,Z_3\oplus N)$, $B=(Z_2,N)$, and $N\sim $  Bern(1/2) which is independent of $Z$. Here,  $\mathrm{I}(Z;A,B)=3$ bits. The unique information about $Z$ that is contained only in $A$ and not in $B$ is effectively in $Z_1$. Thus, $\uni{Z}{A| B} = \mut{Z}{Z_1} = 1$ bit. Redundant information about $Z$ that is contained in both $A$ and $B$ is effectively in $Z_2$ and is given by $\rdn{Z}{A, B}=\mathrm{I}(Z;Z_2)=1$ bit. Synergistic information about $Z$ that is not contained in either $A$ or $B$ alone, but is contained in both of them together is effectively in the tuple $(Z_3\oplus N,N)$, and is given by $\syn{Z}{A,B} {=} \mut{Z}{(Z_3\oplus N,N)}=1 $ bit. This accounts for the $3$ bits in $\mut{Z}{A,B}$.

Defining any one of the PID terms suffices for obtaining the others. This is because of another relationship among the PID terms as follows~\citep{bertschinger2014quantifying}: $\mut{Y}{A}=\uni{Y}{A|B} + \rdn{Y}{A, B}$.  Essentially $\rdn{Y}{A, B}$ is viewed as the sub-volume between $\mut{Y}{A}$ and $\mut{Y}{B}$ (see Fig.~\ref{fig:pid_basic}). Hence,  $\rdn{Y}{A, B} = \mut{Y}{A}- \uni{Y}{A|B}$. Lastly, $\syn{Y}{A,B} =  \mathrm{I}(Y; A, B) - \uni{Y}{A|B} -\uni{Y}{B|A}-\rdn{Y}{A, B}$ (can be obtained once both unique and redundant information has been obtained). Here, we include a popular definition of $\uni{Y}{A|B}$ from \citep{bertschinger2014quantifying} which is computable using convex optimization. % and meets desirable properties (further elaborated in \ref{??}).

\begin{defn}[Unique Information~\citep{bertschinger2014quantifying}]\label{def:bert_def} Let $\Delta$ be the set of all joint distributions on $(Y, A, B)$ and $\Delta_P$ be the set of joint distributions with same marginals on $(Y,A)$ and $(Y,B)$ as the true distribution $P_{YAB}$, i.e., $\Delta_P=\{Q_{YAB} {\in} \Delta$: $Q_{YA} =P_{YA}$ and $Q_{YB}=P_{YB}\}$. Then, \begin{equation} \uni{Y}{A|B}=\min _{Q \in \Delta_P} \mutd{Q}{Y}{A|B}. \label{eq:uniq} \end{equation} Here $\mutd{Q}{Y}{A|B}$ denotes the conditional mutual information when $(Y,\!A,\!B)$ have joint distribution $Q_{YAB}$ rather than $P_{YAB}$.
\end{defn}

\section{Theoretical Contributions}
\label{sec:Main Results}

\subsection{Unraveling the nature of spurious associations with PID}

\begin{prop}[Proposed Disentanglement]
\label{thm:total_info}
For a given data distribution, the total predictive power of the spurious features $B$ and core features $F$ about the target variable $Y$ can be decomposed into four non-negative components:
\begin{align*} 
\mut{Y}{F,B} = \uni{Y}{B\given F} + \uni{Y}{F\given B} & + \rdn{Y}{F,B}+ \syn{Y}{F,B}.
\end{align*}
\end{prop}

\begin{figure}
\centering
\includegraphics[width=0.7\columnwidth]
{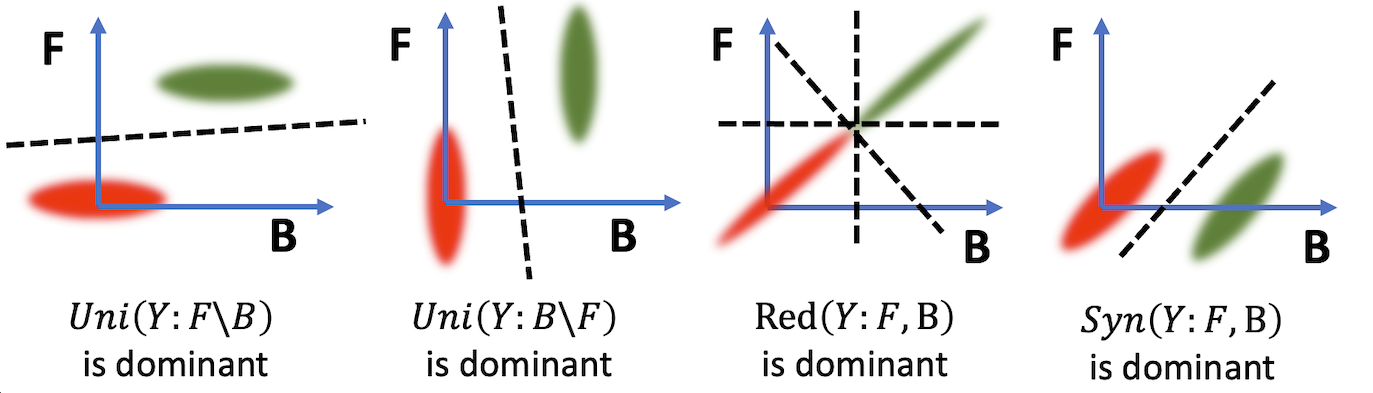}
\vskip -5pt
\caption{Canonical examples distilling four types of statistical dependencies involving core and spurious features when any one PID term is dominant and its effect on the Bayes optimal classifier. In the first two cases, unique information in either $F$ or $B$ is dominant, and they are indispensable to the optimal classifier. When redundant information is dominant, the optimal classifier can pick either $F$ or $B$ without preference. The fourth scenario is interesting, where $B$ is independent of the label $Y$, and yet it contributes to the optimal classifier along with $F$. \label{fig:canonical} }
\vskip -10pt
\end{figure} 

For each term in Proposition~\ref{thm:total_info}, we now explain their nuanced role for any given dataset.

\emph{Interpreting Unique Information $\uni{Y}{B\given F}$ and $\uni{Y}{F\given B}$:} Unique information captures information that is unique in one feature and cannot be obtained from another.\begin{wrapfigure}[7]{r}{0.2\textwidth}
\vspace{-5pt}
\includegraphics[width=0.2\textwidth]{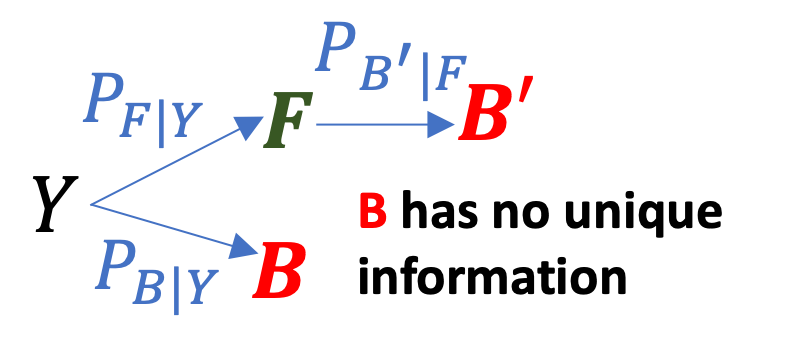}
%\vspace{-5pt}
\caption{Blackwell Sufficiency 
\label{fig:blackwell}}
\end{wrapfigure} To explain the role of unique information in interpreting spuriousness, we draw upon a concept in statistical decision theory called Blackwell Sufficiency~\citep{blackwell1953equivalent} which investigates when a random variable is ``more informative'' (or ``less noisy'') than another for inference (also relates to stochastic degradation of channels~\citep{venkatesh2023capturing,raginsky2011shannon}). Let us first discuss this notion intuitively when trying to infer $Y$ using two random variables $F$ and $B$. Suppose there exists a transformation on $F$ to give a new random variable $B'$ which is always equivalent to $B$ for predicting $Y$ (similar predictive power). We note that $B'$ and $B$ do not necessarily have to be the same since we only care about inferring $Y$. In fact, $B$ and $B'$ can have additional irrelevant information that do not pertain to $Y$, but solely for the purpose of inferring $Y$, they need to be equivalent. Then,  $F$ will be regarded as ``sufficient'' with respect to $B$ for predicting $Y$ since $F$ can itself provide all the information that $B$ has about $Y$ (see Fig.~\ref{fig:blackwell} and first two cases of Fig.~\ref{fig:canonical}). %This intuition is formalized as:
\begin{defn}[Blackwell Sufficiency~\citep{blackwell1953equivalent}]
\label{defn:blackwell}
A conditional distribution $P_{F|Y}$ is Blackwell sufficient with respect to another conditional distribution $P_{B|Y}$ if and only if there exists a stochastic transformation (equivalently another conditional distribution $P_{B'|F}$ with both $B$ and $B' \in \mathcal{B}$) such that $P_{B'|F} \circ P_{F|Y} = P_{B|Y}$.
\end{defn}

In fact, the unique information $\uni{Y}{B|F}$ is $0$ if and only if $P_{F|Y}$ is Blackwell sufficient with respect to $P_{B|Y}$ (see Theorem~\ref{thm:uniq_info}, the proof is given in the Appendix~\ref{app:Proof of Theorem 2}). 

\begin{thm}[Interpretability Insights from Unique Information]
\label{thm:uniq_info}
The following properties hold:
\begin{itemize}[itemsep=0pt, leftmargin=*, topsep=0pt]
\item $\uni{Y}{B\given F}\leq \mut{Y}{B}$ and goes to $0$ if the spurious feature $B$ is independent of the target $Y$. However, $\uni{Y}{B\given F}$ may be 0 even if $\mut{Y}{B}>0$. %$\mut{Y}{B}=0$ is not necessary and
\item $\uni{Y}{B\given F}=0$ \textbf{if and only if}  $P_{F|Y}$ is Blackwell sufficient with respect to $P_{B|Y}$. 
\item $\uni{Y}{B\given F} \leq  \uni{Y}{ B'\given F' }$, i.e., it is non-decreasing if some features from the core set are moved to the spurious set, i.e., $B'=B \cup W$ and $F'= F \backslash W$.
\end{itemize}
\end{thm}

Since unique information $\uni{Y}{B\given F}=0$ if and only if $P_{F|Y}$ is Blackwell Sufficient with respect to $P_{B|Y}$, we note that $\uni{Y}{B|F}>0$ captures the ``departure'' from Blackwell Sufficiency, and thus quantifies \emph{relative informativeness}. \emph{Intuitively, what this means is that for a data distribution, there is no such transformation on core feature $F$ that is equivalent to the spurious feature $B$ for the purpose of predicting $Y$. This essentially makes spurious feature $B$ indispensable for predicting $Y$, forcing a model to emphasize it in decision-making.} A similar argument can be made for $\uni{Y}{F|B}$. Furthermore, $\uni{Y}{B\given F}$ also satisfies an intuitive property that as more features get categorized as spurious instead of core, the unique information in the spurious set would keep increasing. %edited

\emph{Interpreting Redundant Information $\rdn{Y}{F,B}$:} Redundant information about the target variable $Y$ is the information that can be obtained from either the spurious features $B$ or the core features $F$ without any preference towards either. We consider the following canonical example to interpret the role of redundant information $\rdn{Y}{F,B}$ for predicting the target variable $Y$ (third case of Fig.~\ref{fig:canonical}).

\begin{lem}[Redundancy]
Let $B = Y+N_B, F= Y+N_F$ where noise $N_B$ and $N_F$ are  Gaussian such that $N_B = N_F = N\sim \mathcal{N}(0, \sigma^2_N)$ and $N\indep Y$. In this case, (i) an optimal predictor $\hat{Y}$ can either utilize $B$ or $F$ with neither being indispensable, i.e., $\hat{Y} = f(B)$ or $f(F)$ or $f(B,F)$; and (ii) $B$ and $F$ will only have redundant information with the other PID terms being $0$. \label{lemma:redundancy}
\end{lem}

\emph{Interpreting Synergistic Information:} Synergistic information $\syn{Y}{F,B}$ is an interesting term that emerges when spurious features $B$ and core features $F$ together reveal more about the target variable $Y$ than what can be revealed by either of them alone. In essence, it is the ``extra'' or ``emergent'' information that arises only when multiple features interact, rather than when they are considered separately. %Consider the example below to have an intuition on the role of this component.

\begin{lem}[Synergy]
\label{lemma:synergy}
Let $B {=} N$, $F{=}Y+N$ where $Y{\sim} Bern(1/2),$ $N{\sim} \mathcal{N}(0, \sigma^2_N$), $N\indep Y$ and $\sigma_N^2 {\gg} 1$. Then, (i) an optimal predictor $\hat{Y}=f(F,B)=F-B$ (uses both $F$ and $B$); and (ii) $\mut{Y}{B}$ and $\mut{Y}{F}\approx 0$ but $\mut{Y}{B,F}$ is still significant due to synergistic information $\syn{Y}{B,F}$. 
\end{lem}

For this example (fourth case in Fig.~\ref{fig:canonical}), both $F$ and $B$ alone will have limited predictive power when $N$ has high variance. However, using $F$ and $B$ together, one can perfectly predict $Y$, e.g., an optimal predictor is $\hat{Y}=f(F,B)=F-B$. Here $\mut{Y}{B}=0$, and we also show that $\mut{Y}{F}\approx 0$ (see Lemma~\ref{app:lemma3} in Appendix~\ref{app:Proof of Theorem 2}). However, the synergistic information $\syn{Y}{F,B}$ is still significant. Since $\mut{Y}{F}\approx 0$, we contend that here $B$ essentially denoises the core feature $F$, enhancing its predictive power. Thus, synergistic information captures an interesting nuanced interplay between core and spurious, not captured by the other PID terms.

\begin{figure}[!htbp]
  \centering
\includegraphics[width=0.9\textwidth ]{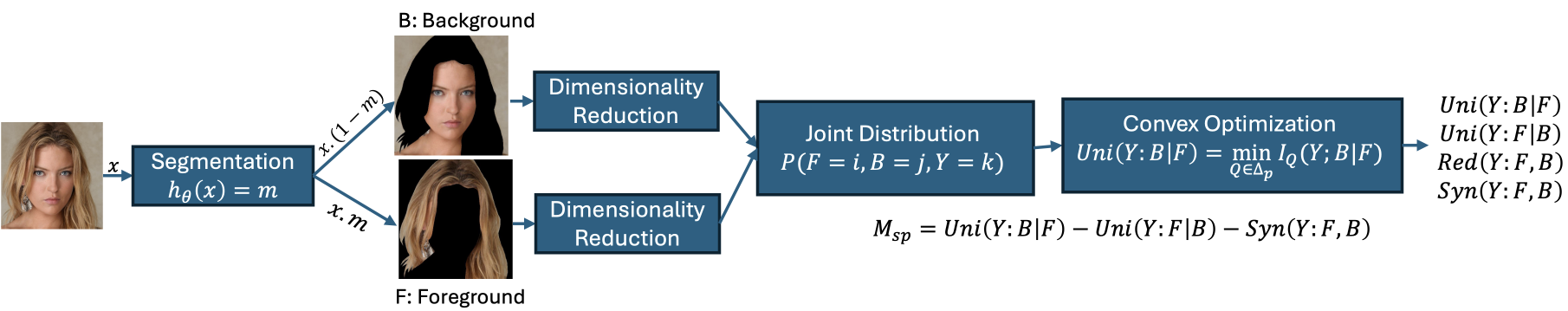} 
  \caption{Spuriousness Disentangler: An autoencoder-based explainability framework to handle high dimensional continuous image data with 3 modules: (i) Segmentation of images into background (spurious features) and foreground (core features); (ii) Dimensionality reduction involving an autoencoder with bottleneck and clustering; and  (iii) Estimation of the joint distribution followed by the computation of PID values through convex optimization and computing $M_{sp}$.}
    \label{flowofencoder}
 \vskip -10pt
\end{figure}

\subsection{A novel measure of dataset spuriousness}

Our objective is to quantify a dataset's spuriousness, which steers machine learning models towards the spurious features over the core features. To this end, we will examine some candidate measures ($M_{sp}$) of spuriousness through examples and counterexamples and systematically arrive at a measure that meets our requirements. Since we are trying to capture spuriousness which arises when the target variable $Y$ is associated with the spurious features $B$, we might first consider the mutual information $\mut{Y}{B}$ as a candidate measure for spuriousness since it captures the dependence between $Y$ and $B$. 

\begin{candmeas}
$M_{sp} = \mut{Y}{B}$.
\label{candmeas:mi}
\end{candmeas}
\textbf{Counterexample 1.} We refer to the example in Lemma~\ref{lemma:redundancy} where $\uni{Y}{B|F}=0$. Hence, $\mut{Y}{B} = \uni{Y}{B|F} + \rdn{Y}{F,B} = \rdn{Y}{F,B}$. Here, our candidate measure $M_{sp}=\mut{Y}{B}$ is positive, which would indicate ``spuriousness,'' i.e., undesirable steering towards $B$. However, in this case, the model can use either spurious features $B$ or core features $F$ (see Lemma~\ref{lemma:redundancy}) without any preference. Thus, $\mut{Y}{B}$ is not well suited to be a measure of undesirable spuriousness.

Since redundant information can lead to the utilization of either spurious or core features, another candidate measure of spuriousness might be obtained by subtracting the desirable dependence $\mut{Y}{F}$ from the undesirable dependence $\mut{Y}{B}$, i.e., $M_{sp} = \mut{Y}{B} - \mut{Y}{F}$. For the example in Lemma~\ref{lemma:redundancy}, this new $M_{sp} = 0$, indicating no preference towards spurious or core features.

\begin{lem} \label{lemma:bayes_decision}Let $B = Y+N_B, F= Y+N_F$ where noise $N_B$ and $N_F$ are standard Gaussian noises with $N_B\sim \mathcal{N}(0, \sigma^2_{N_B})$, $N_F\sim \mathcal{N}(0, \sigma^2_{N_F})$ and ${N_B}\indep Y$, ${N_F}\indep Y$. Now if $\sigma^2_{N_F} \gg \sigma^2_{N_B}$, (i) the optimal classifier relies strongly on spurious feature $B$; and (ii) $\uni{Y}{B|F}>0$.
\end{lem}
If $\sigma^2_{N_F} \gg \sigma^2_{N_B}$, then $\mut{Y}{B} > \mut{Y}{F}$, i.e., $M_{sp}>0$ (see Lemma~\ref{app:lemma3} in Appendix~\ref{app:Proof of Theorem 2}). Here, the output of a model is more likely to be $\hat{Y} = f(B)$ and the model might be more prone to utilizing the spurious features $B$ (see Fig.~\ref{fig:canonical}). On the other hand, if $\sigma^2_{N_F} \ll \sigma^2_{N_B}$, then $\mut{Y}{F} > \mut{Y}{B}$, i.e., $M_{sp}<0$ . In this case, the output of the model is also more likely to be $\hat{Y} = f(F)$, and the model might lean towards the core features $F$. Hence, $M_{sp} = \mut{Y}{B} - \mut{Y}{F}$ might seem like a suitable measure to quantify spuriousness, i.e., steering models towards $B$ over $F$.

\begin{candmeas} 
$M_{sp} = \mut{Y}{B} - \mut{Y}{F} =\uni{Y}{B|F} - \uni{Y}{F|B}$. 
\end{candmeas}

\textbf{Counterexample 2.} Consider Lemma~\ref{lemma:synergy} where the optimal predictor $\hat{Y}= F-B$ utilizing both the spurious features $B$ and core features $F$.  Here, this $M_{sp}\approx0$ (Lemma~\ref{lemma:synergy}). However, for this particular example, since the prediction is jointly influenced by both core features $F$ and spurious features $B$, we contend that a measure of spuriousness should not be $0$. The measure should therefore include a term that considers the joint contribution of both of these features, capturing the fact that here $B$ simply helps in denoising and enhancing the predictive capabilities of the core features $F$. This aspect is precisely captured by synergistic information $\syn{Y}{F,B}$. Hence, we also include it in $M_{sp}$, leading to the following proposed measure.

\begin{prop}[Measure of Spuriousness $M_{sp}$]
\label{thm:measure}
Our proposed measure of spuriousness is given by:
\begin{align} 
M_{sp} = \uni{Y}{B\given F} - \uni{Y}{F\given B} - \syn{Y}{F,B}.
\label{eq:measure}
\end{align}
\end{prop}

\section{Methodology: Spuriousness Disentangler}

We propose an autoencoder-based explainability framework -- that we call Spuriousness Disentangler -- to disentangle the PID values and compute the measure $M_{sp}$ (see Fig.~\ref{flowofencoder} and Algorithm \ref{algo}) for a given dataset. The framework consists of three modules: segmentation, dimensionality reduction, and estimation.
\RestyleAlgo{ruled}
\SetKwComment{Comment}{\# }{}
\SetKwInOut{KwData}{Input}
\SetKwInOut{KwOut}{Output}
\begin{algorithm}
\caption{Spuriousness Disentangler: An Autoencoder-Based Explainability Framework}
\label{algo}
\KwData{Encoder input $F$ or $B$, decoder output $F'$ or $B'$, autoencoder parameters $\theta$, cluster centers $\mu_j$ (parameters of the clustering layer), embedded point $z_i$ (output of the clustering layer), soft label $q_{i}$ (output of the clustering layer), hyperparameter $\gamma$, pretrain epochs $e_p$, maximum epochs $e_{\max}$, update interval $T$, batch number $b$, threshold $\delta$, target variable $Y$.}
%\KwOut{Estimated PID terms}
\textbf{Step 1: Segmentation}\;

Perform segmentation to separate  the core features $F$ and spurious features $B$ if necessary;

% $F \gets x \times m$ Discuss Segmentation, Notation Issue\tcp*{Foreground}  
% $B \gets x \times (1 - m)$ \tcp*{Background}

\textbf{Step 2: Dimensionality Reduction (Autoencoder Training)}\;
\For{$e = 1$ \KwTo $e_p$}{
    compute $L_r \gets \|F - F'\|_2^2$ \tcp*{Reconstruction loss (MSE)}
    compute gradient $\nabla_{\theta}L_r$ and update $\theta$ via gradient descent
}
Initialize $\mu_j$ performing k-means clustering and $p_{ij}\gets 
   {\frac{q_{ij}(0)^2}{\sum_i q_{ij}(0)}}/
        {\sum_j \frac{q_{ij}(0)^2}{\sum_i q_{ij}(0)}}$\;
        
\For{$e = 1$ \KwTo $e_{\max}$}{
   compute $L \gets L_r + \gamma L_c$\;
   where, $L_c \gets KL(P||Q) = \sum_{i}\sum_{j} p_{ij} \log\frac{p_{ij}}{q_{ij}(e)}$\ and $q_{ij}(e) \gets \frac{(1 + \|z_i(e) - \mu_j(e)\|^2)^{-1}}
        {\sum_j (1 + \|z_i(e) - \mu_j(e)\|^2)^{-1}}$\;
        
   \If{$(b - 1) \bmod T == 0$ \textbf{and} $(b \neq 1$ \textbf{and} $e \neq 1)$}{
   update $p_{ij} \gets
   {\frac{q_{ij}(e)^2}{\sum_i q_{ij}(e)}}/
        {\sum_j \frac{q_{ij}(e)^2}{\sum_i q_{ij}(e)}}$\text{,}\ $\text{preds}_{\text{new}} \gets \arg\max{q_{ij}(e)}$ and  $\delta_{\text{label}} \gets \frac{\sum (\text{preds}_{\text{new}} \neq \text{preds}_{\text{old}})}{|\text{preds}|}$;\\
        \If{$\delta_{\text{label}}$ < $\delta$}
        {break;
        }
        $\text{preds}_{\text{old}} \gets \text{preds}_{\text{new}}$;
        }
   compute gradient $\nabla_{\theta}L$ and update $\theta$ via gradient descent
}
Use discrete values from clusters of latent representations for $F$ and $B$\;
\textbf{Step 3: Estimation}\;

\Indp
Estimate $P(F=i, B=j, Y=k)$\tcp*{Joint distribution estimation} 
Compute $\uni{Y}{F|B} = \min_{Q \in \Delta_P} \mutd{Q}{Y}{F|B}$ according to Eq.~\ref{eq:uniq}\tcp*{Iterative optimization}
\hspace{1.5em}Calculate $M_{sp}$ using Eq.\ref{eq:measure};
\Indm

\KwOut{Measure of spuriousness $M_{sp}$;}
\end{algorithm}

\textbf{Segmentation:} 
This step involves separating the core ($F$) from the spurious ($B$) features. Publicly available segmentation masks are used where available. For datasets without explicit core or spurious feature information, masks can be generated using pre-trained semantic segmentation models~\citep{FPN}. Another possibility is to use CLIPSeg~\citep{clipseg}, an Open-Vocabulary Semantic Segmentation model, to automatically isolate various objects in the foreground to approximately obtain the core and spurious features. Experiments for various scenarios are provided in Section~\ref{sec:Experiments}.

\textbf{Dimensionality Reduction:}  Since we are dealing with high-dimensional image data, our next module compresses them into lower-dimensional discrete vectors. We propose to use an autoencoder, a deep neural network consisting of an encoder and a decoder, as shown in Fig.~\ref{autoencoder} to jointly do dimensionality reduction and clustering. We incorporate a bottleneck structure from~\citep{sadeghi2023deep} with convolutional autoencoders of~\citet{guo2017deep} to obtain more informative lower-dimensional representation of the input image. Along the lines of~\citet{guo2017deep}, we obtain the clusters of the low-dimensional data $q$ by optimizing a joint loss function defined as $L = L_r + \gamma L_c $ where $L_r$ is the representation loss, $L_c$ is the clustering loss, and  $\gamma$ is a non-negative constant. See Appendix~\ref{app:dimension_reduction} for more details.

\begin{wrapfigure}[12]{r}{0.4\textwidth}
%\vspace{-5pt}
\includegraphics[width=0.4\textwidth]{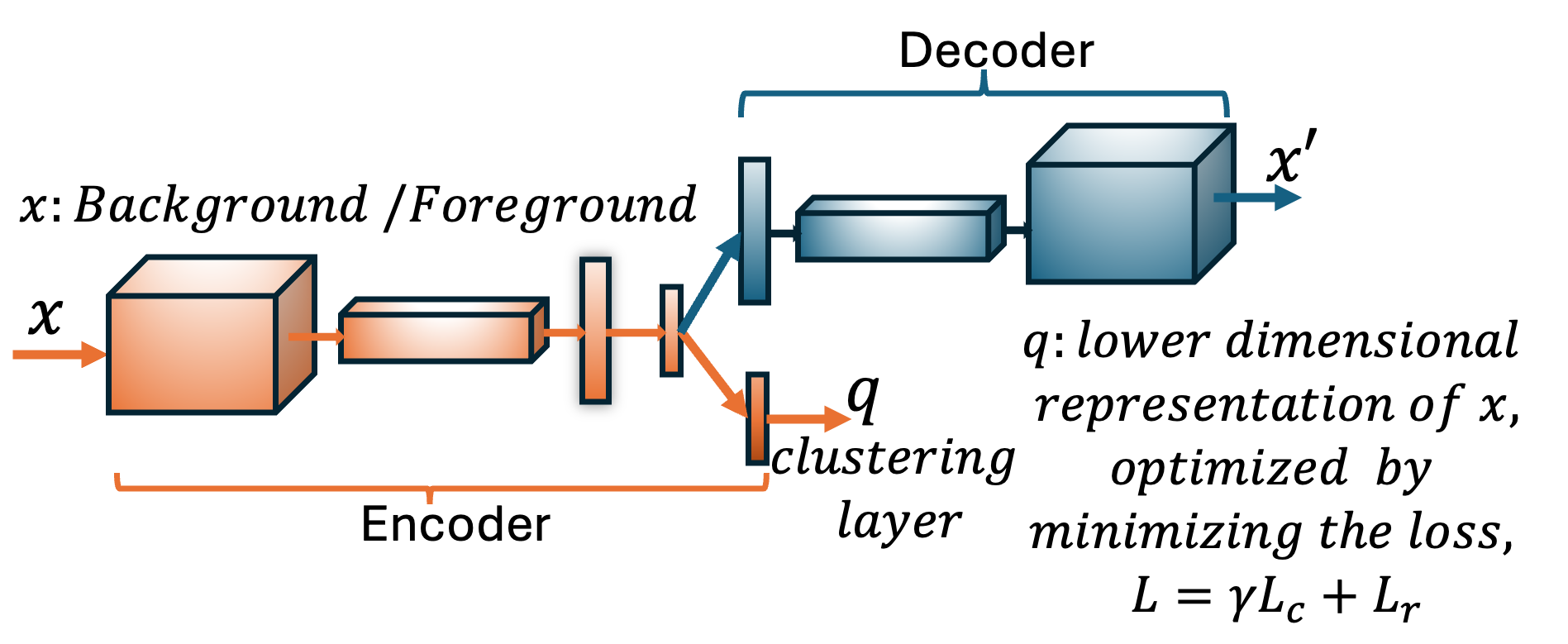}
%\vspace{-5pt}
\caption{Dimensionality reduction module: Autoencoder with clustering to have discrete lower-dimensional embedding.\label{autoencoder}}
\end{wrapfigure}

\textbf{Estimation:} The final step includes the estimation of the joint distribution and the PID values, also leading to the proposed measure $M_{sp}$. The joint distribution is obtained by computing the normalized 3D histogram of the discrete clusters of the foreground, background, and binary target variable. Then, the PID values are estimated from the joint distribution using the DIT package~\citep{dit}, which is a Python package for discrete information theory. We use $I_{BROJA}$ developed in~\citep{bertschinger2014quantifying} to compute PID which solves the convex optimization problem in Definition~\ref{def:bert_def} and results in four non-negative terms, namely, $\uni{Y}{B|F}$, $\uni{Y}{F|B}$, $\rdn{Y}{F,B}$, and $\syn{Y}{F,B}.$ We use them to calculate the measure $M_{sp}$. In case of a multiclass classification task (more than two classes), we use the PID estimator proposed by~\citep{liang2023quantifying}.

\section{Experimental Results}
\label{sec:Experiments}

Since our proposition is a preemptive dataset explainability framework, the objective of our experiments is to see how our anticipations from dataset before training agree with post-training model generalization metrics like worst-group accuracy, SHAP, IoU, etc. In particular, we consider four setups: (i) Both core and spurious features are available; (ii) Either core or spurious is available; (iii) Segmentation to obtain core and spurious features; and (iv) Non-spatial spuriousness. We conduct experiments on six datasets: Waterbird~\citep{WahCUB_200_2011}, Adult~\citep{adult_2}, CelebA~\citep{CelebAMask-HQ}, Dominoes~\citep{shah2020pitfalls}, Spawrious~\citep{lynch2023spawrious}, and Colored MNIST~\citep{arjovsky2019invariant}. We begin with using our explainability framework, namely Spuriousness Disentangler, on each dataset (often with dataset-specific sampling biases and variations) to compute the PID values and $M_{sp}$. We fine-tune the pre-trained ResNet-50~\citep{resnet} model and calculate the worst-group accuracy over all groups for the Waterbird, CelebA, Dominoes, Spawrious, and Colored MNIST datasets. For the tabular dataset Adult, we train the XGBoost~\citep{chen2016xgboost} model and calculate the worst-group accuracy. More details of the experiments are provided in the Appendix~\ref{app:Experimental Setup} along with a comparison of our proposed measure $M_{sp}$ with other measures in Table~\ref{tab:comparison}, Appendix~\ref{app:iou}. 
\begin{figure}[!htbp]
   \centering
 \includegraphics[height=2.6cm]{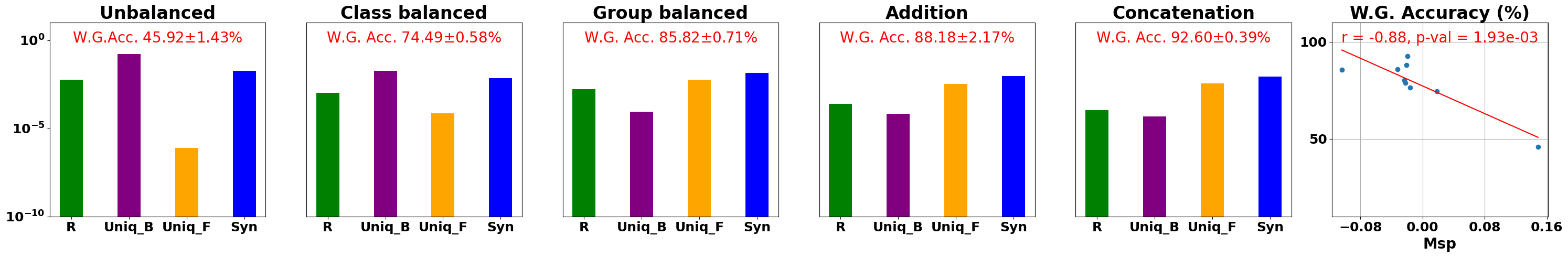} 
 \vskip -10pt
    \caption{Bar-plot showing the redundant information (R), unique information in background (Uniq-B) and foreground (Uniq-F), and Synergistic information (Syn) for different variants (essentially different sampling biases) of the \textbf{Waterbird} dataset. Observe that the Uniq-B decreases and Uniq-F increases for group-balanced, addition, and concatenation datasets compared to those of the unbalanced dataset. Also, observe a negative trend between $M_{sp}$ and W.G. Acc. Note that the y-axis of the first five subplots is in log scale.}
     \label{fig:waterbird_PID}
      \vskip -5pt
\end{figure}

\textbf{1. Both core and spurious features available:} For the Waterbird, Dominoes, and Adult datasets, core and spurious features are well-defined and accessible. In Waterbird, the bird's pixels serve as core features, while the background is spurious. In the synthetic Dominoes dataset, car or truck images are core, and digits zero or one are spurious. For the tabular Adult dataset, \emph{gender} is considered spurious, while \emph{age, education-num,} and \emph{hours-per-week} are chosen as core. 

\begin{figure}[ht]
   \centering
    \includegraphics[height=5.3cm]{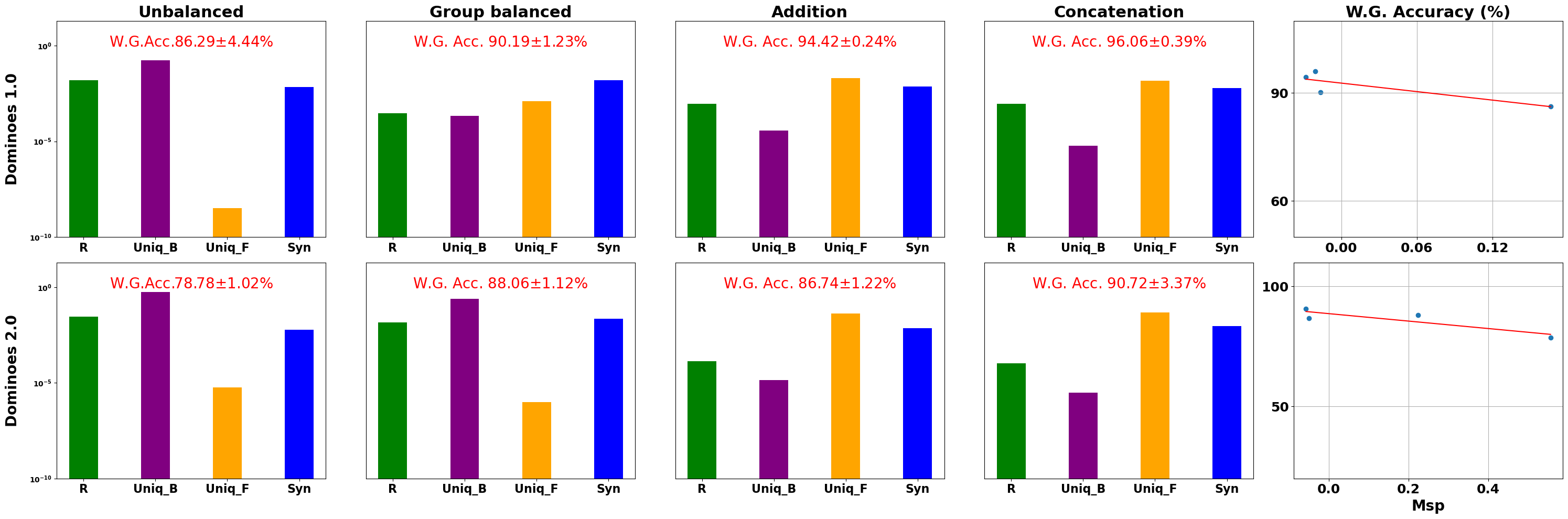}
    \vspace{-10pt}
    \caption{Bar-plot showing the redundant information (R), unique information in background (Uniq-B) and foreground (Uniq-F), and Synergistic information (Syn) for \textbf{Dominoes} dataset. The Uniq-B decreases group-balanced and background mixed datasets, and the Uniq-F increases for background mixed datasets compared to the unbalanced dataset. Also observe a negative trend between the $M_{sp}$ and W.G. Acc.}
    \vspace{-10pt}
     \label{fig:dominoes_PID}
\end{figure}
\begin{figure}[!htbp]
  \centering
  \centering
    \includegraphics[height=2.8cm]{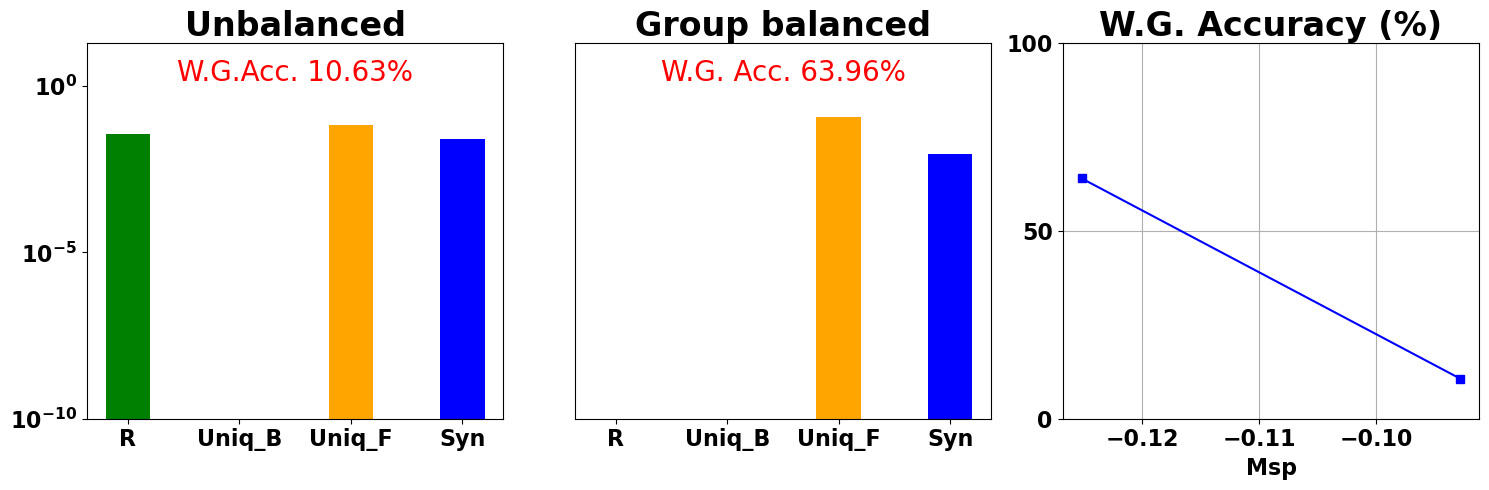}
    \caption{Bar-plot showing the PID values for the tabular dataset: \textbf{Adult}. The last plot shows a negative relationship between the worst-group accuracy and the measure of spuriousness $M_{sp}$. Note that the y-axis of the first two subplots is in log scale.}
    \label{fig:adult_pid}
\end{figure}

\begin{figure}[!htbp]
    \centering \includegraphics[height=3.2cm]{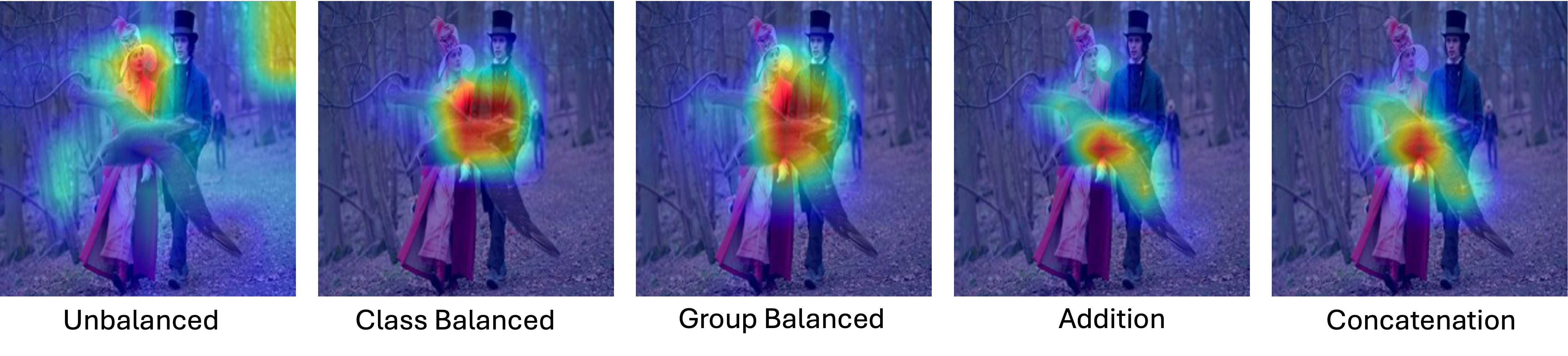}
    %\vskip -10pt
   \caption{Examples of Grad-CAM images of Waterbird: For the unbalanced dataset, the model adds more emphasis (red regions) to the background, while in the class-balanced, group-balanced, addition, and concatenation versions, the foreground gets more emphasis.} %\vspace{-5pt}
   \label{waterbird_grad}
   %\vskip -14pt
\end{figure}

\textbf{Observations: }Fig.~\ref{fig:waterbird_PID} (Waterbird), Fig.~\ref{fig:dominoes_PID} (Dominoes), and Fig.~\ref{fig:adult_pid} (Adult) show that the unique information in the background decreases and the unique information in the foreground increases when the dataset is modified from unbalanced to other variants. We also observe a negative correlation between the measure of spuriousness $M_{sp}$ and the worst-group accuracy (see the Pearson correlation coefficient ($r$) with corresponding p-value in Fig.~\ref{fig:waterbird_PID}). \emph{The negative correlation between our proposed dataset spuriousness measure $M_{sp}$ and the model generalization metric worst-group accuracy indicates that $M_{sp}$ can serve as an indicator of dataset quality before training.} Fig.~\ref{waterbird_grad} and Fig~\ref{dominoes_grad} in Appendix~\ref{app:setup1} show that when the dataset is balanced or mixed background, the Grad-CAM~\citep{selvaraju2017grad} emphasizes more on the core features (the red regions) while in the unbalanced dataset, the background is more emphasized, which results in poor worst-group accuracy. See Table~\ref{tab:IoU} and Table~\ref{tab:cluster} in Appendix~\ref{app:iou}, which show how Intersection-over-union (IoU) between the ground truth masks and Grad-CAM masks changes over different variants of the Waterbird dataset and the variation of PID values for different numbers of clusters, respectively.

\textbf{2. Only core features available:} For the CelebA dataset, the core features are the pixels corresponding to \emph{hair}. However, the spurious one is not given directly. We consider everything excluding \emph{hair} as spurious.

\begin{figure}[!htbp]
  \centering
\includegraphics[height=2.6cm]{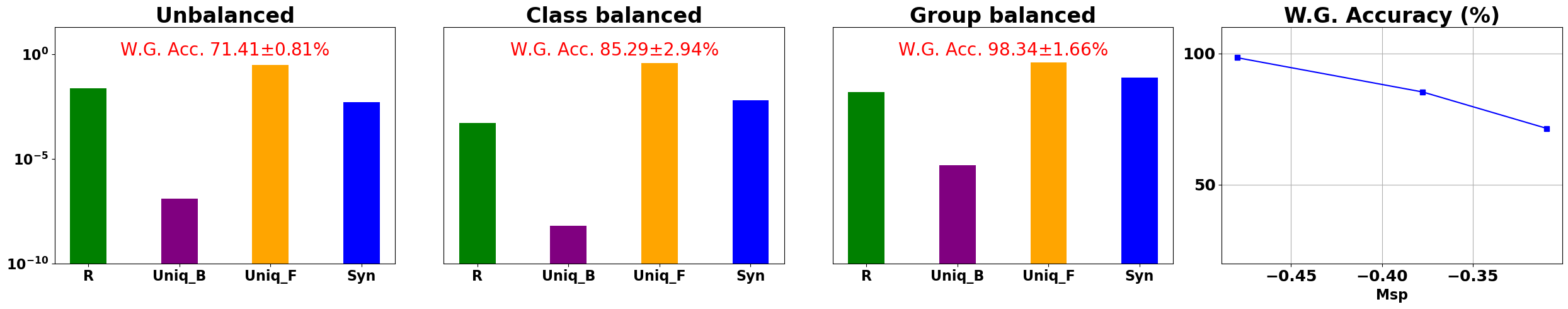}
    \caption{CelebA: Observe that the Uniq-F and Synergy increase for class-balanced and group-balanced datasets compared to that of the unbalanced dataset, and a trade-off between $M_{sp}$ and W.G. Acc.}
    \label{fig:celebA_PID}
  \end{figure}
  
\textbf{Observations}: Fig.~\ref{fig:celebA_PID} shows the PID values for unbalanced, class-balanced, and group-balanced CelebA datasets. \emph{Firstly,} the unique information in the foreground is the most prominent one among all other PID values. Observe that the Uniq-B is almost negligible. The Uniq-F increases while the dataset is class-balanced or group-balanced, along with the increasing worst-group accuracy. There is a negative trend between worst-group accuracy and the measure of spuriousness $M_{sp}$. \emph{Secondly,} the Grad-CAM images (see Fig.~\ref{CelebA_grad} in Appendix.~\ref{app:celebA}) show that the model focuses on the hair for the balanced dataset, but for the unbalanced dataset, it emphasizes more on the face. 
\begin{figure}[!htbp]
   \centering
\includegraphics[height=2.7cm]  {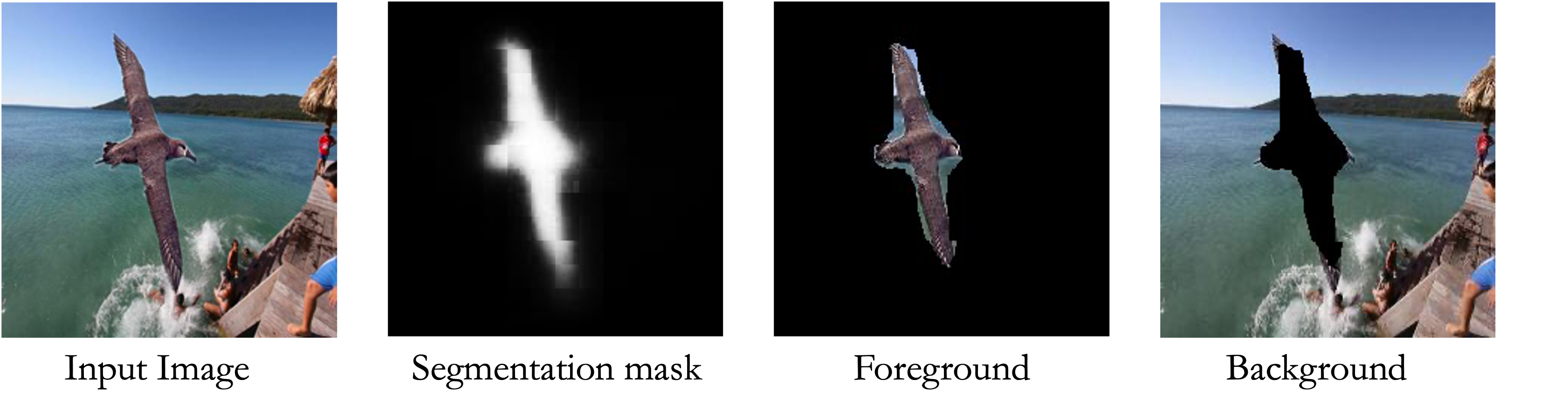}
 \vskip -10pt
    \caption{The segmentation mask is obtained by zero-shot image segmentation using CLIPSeg~\citep{clipseg}. We get the foreground by multiplying the input image with the $mask$ and background by multiplying $(1-mask)$.}
    \vskip -10pt
    \label{fig:clipseg_out}
\end{figure}

\textbf{3. Segmentation to obtain features:} When explicit information about the core and spurious features is unavailable, we need to perform segmentation. We perform experiments with the Waterbird dataset assuming the unavailability of the segmentation masks. We leverage CLIPSeg~\citep{clipseg}, an Open-Vocabulary Semantic Segmentation (OVSS) model, to generate the mask for the bird object with the prompt ``bird'' (details in Appendix~\ref{app:clipseg}), thus utilizing only partial knowledge about the target object. The generated masks are applied to the original images to separate the foreground and background (see Fig.~\ref{fig:clipseg_out}). Fig.~\ref{fig:clipseg_waterbird} reveals a negative correlation between the worst-group accuracy and increasing values of $M_{sp}$, calculated using the obtained background and foreground. 

\begin{figure}[!htbp]
  \centering
  \begin{subfigure}[b]{0.47\textwidth}
  \centering
    \includegraphics[height=2.7cm]{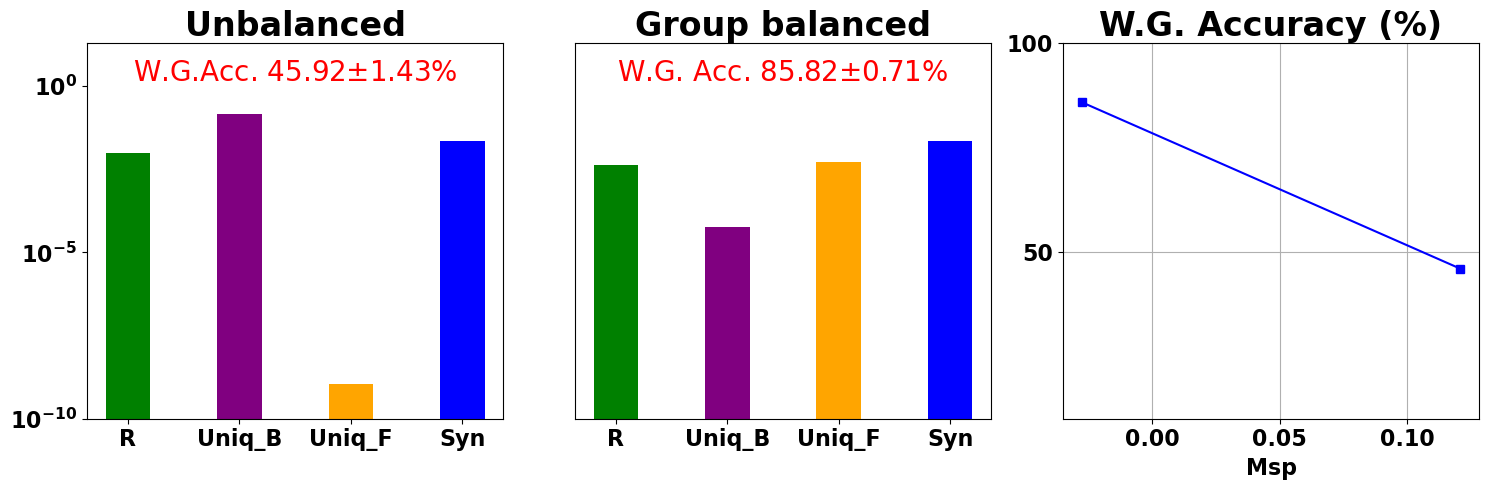}
    \caption{Waterbird with CLIPSeg}
    \label{fig:clipseg_waterbird}
  \end{subfigure}
  \hfill
  \begin{subfigure}[b]{0.47\textwidth}
  \centering
\includegraphics[height=2.5cm]{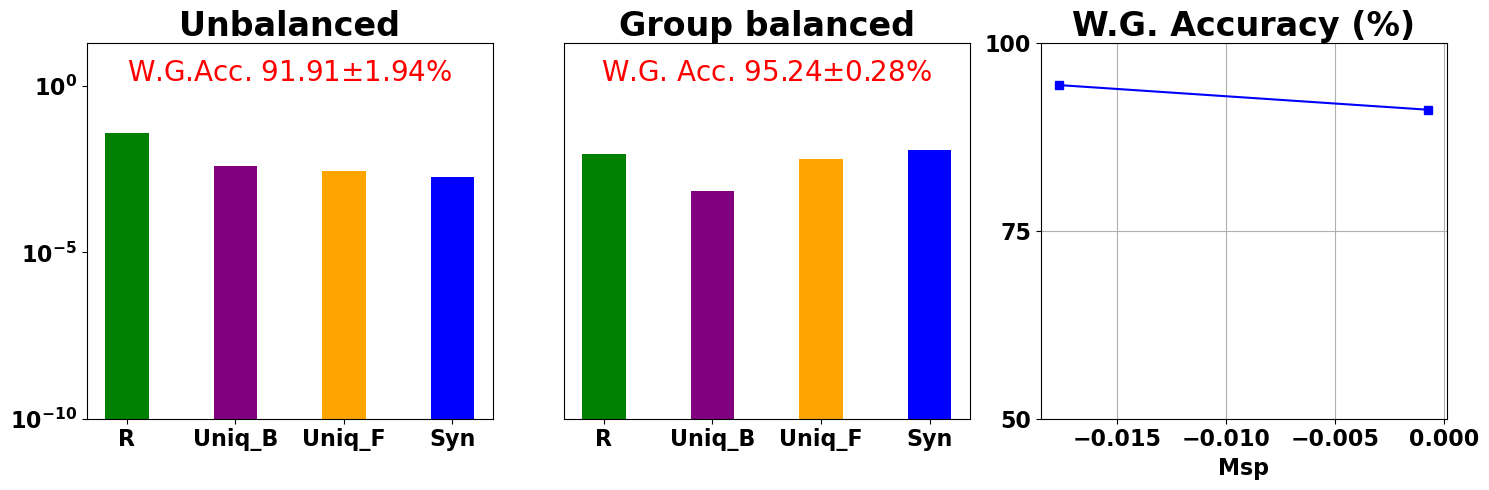}
    \caption{Spawrious}
    \label{fig:spawrious_pid}
  \end{subfigure}
  \caption{\textbf{(a,b)} The first two plots show the change in redundancy, unique information, and the synergistic information. The last plot shows a negative relationship between the W.G. Acc. and the measure of spuriousness $M_{sp}$. Note that the y-axis of the first two subplots is in log scale.}  
\end{figure}

Alternatively, for the Spawrious dataset, we use a pre-trained segmentation model~\citep{FPN} to generate the mask of the dog and separate the foreground ``dog'' from the background. Fig.~\ref{fig:spawrious_pid} shows that the redundancy and unique information in the background decrease while the unique information in the foreground and synergy increase when the dataset is group-balanced. We also observe a negative trend between $M_{sp}$ and the worst-group accuracy, showing the effectiveness of the measure.

\textbf{4. Non-spatial spuriousness:} We apply our framework to the Colored MNIST~\citep{arjovsky2019invariant} dataset, which has non-spatial biases. In this setting, we define the grayscale digit images as the core features $F$ and the colored digit images as the spurious features $B$, leveraging partial knowledge that the target variable should correspond to the actual digits rather than their color. We perform two experiments: (i) a 10-class digit classification and (ii) a binary classification, where digits $<5$ are class 0, and $\geq5$ are class 1. There is a negative correlation between the measure of spuriousness $M_{sp}$ and worst-group accuracy (W.G. Acc.) (see the Pearson correlation coefficient ($r$) with corresponding p-value), consistent with observations from other datasets, validating the broad applicability of our proposed measure (see Fig.~\ref{fig:cmnist}). Furthermore, by extending our framework to the multiclass setting, we demonstrate its enhanced versatility.
\begin{figure}[!htbp]
  \centering
  \begin{subfigure}[b]{\textwidth}
  \centering
    \includegraphics[height=2.7cm]{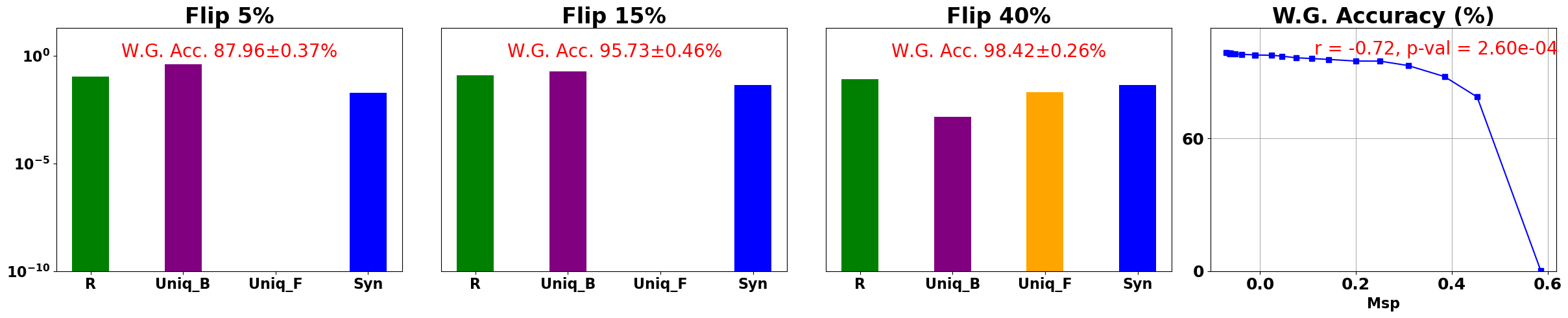}
    \caption{Binary CMNIST}
    \label{fig:binary_cmnist}
  \end{subfigure}
  \vskip 1em
  \begin{subfigure}[b]{\textwidth}
  \centering
\includegraphics[height=2.7cm]{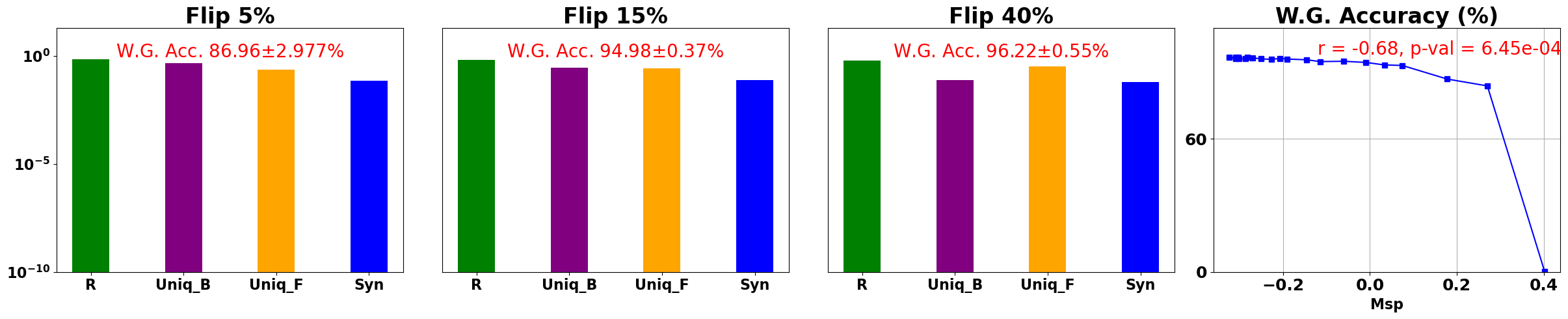}
    \caption{Multiclass CMNIST}
    \label{fig:multiclass_cmnist}
  \end{subfigure}
  \caption{\textbf{(a,b)} The first three plots show the change in redundancy, unique information, and the synergistic information. The last plot shows a negative relationship between the W.G. Acc.(\%) and the measure of spuriousness $M_{sp}$. Note that the y-axis of the first three subplots is in log scale.}
  \label{fig:cmnist}
\end{figure}

\section{Conclusion} 
This work brings in a novel perspective on spuriousness by identifying four types of statistical dependencies in a dataset, leveraging the PID framework: when $F$ or $B$ is indispensable (unique information dominant), when either $F$ or $B$ suffices (redundant information dominant), and when both $F$ and $B$ are jointly needed (synergistic information dominant). This leads us to propose a measure of dataset spuriousness as an efficient way to assess dataset quality before performing the actual training or fine-tuning, which can be computationally intensive, particularly in the era of foundational models. We also perform experiments on several datasets to check if our anticipations from data agree with post-training model generalization metrics such as worst-group accuracy. Notably, worst-group accuracy and our measure of spuriousness are not mathematically the same thing: our measures anticipate how the Bayes optimal classifier(s) should behave, and the empirical measures show how specific models actually behave (when trained on that dataset without doing anything specific for spuriousness). Nonetheless, we observe an interesting correlation across a broad range of experimental setups, further validating the efficacy of our measure. Our implementation is able to handle high-dimensional image data and estimate the PID terms (Broader Impacts in Appendix~\ref{app:broader_impact}).

\bibliography{main}
\bibliographystyle{tmlr}

\appendix
\newpage
\section{Limitations} (i) Identifying spurious features and core features of a given dataset automatically is not always straightforward. Future work will look into alternate techniques, such as causal discovery~\citep{zanga2022survey} (recently using LLMs~\citep{liu2024large}), as well as validation on NLP datasets. (ii) Separation of spurious and core features often relies on manual annotations, heuristic segmentation, or prior domain knowledge, potentially introducing subjective bias in feature categorization. (iii) The estimation is highly data-dependent. A small change in the dataset can greatly affect the PID values. In fact, estimating information-theoretic quantities such as mutual information is inherently difficult in high-dimensional settings \citep{belghazi2018mutual}, including formal limits on sample complexity for all estimators \citep{mcallester2020formal}, and PID estimation would also inherit these challenges. Moreover, PID requires solving an additional optimization problem beyond mutual information computation, which can introduce additional difficulties. Future work will look into estimation error analysis and alternate techniques~\citep{lyu2024explicit,goswami2024analytically,gomes2024measure,mediano2025toward}.
(iv) The efficiency and robustness of the Spurious Disentangler can also be improved. (v) Additionally, there can be groups of spurious features rather than just one, which can have nuanced interplay among them, which is another interesting direction.

\section{Broader Impact}

\label{app:broader_impact}

Quantifying spuriousness has significant broader impacts across multiple domains. Quantification of dataset spuriousness might improve the trustworthiness of AI in several high-stakes and safety-critical applications, such as healthcare, which can directly impact people's lives. Spurious patterns often lead to biased predictions, particularly in sensitive domains such as hiring, lending, or criminal sentencing. Going beyond existing works, our research paves the way for improved understanding of the nature of spurious relationships, enabling interpretability, which could also have significant implications in auditing. 

A framework for dataset explainability provides an alternative to combating spuriousness during training by providing preemptive insights to inform the training process (analogous to ``nutrition labels'' \citet{yang2018nutritional} or ``datasheets for datasets'' \citet{gebru2021datasheets}). By enabling dataset quality check and cleansing prior to training, it can bypass expensive adversarial training, often used to avoid spurious patterns. Having clean datasets for fine-tuning is particularly valuable in the era of large foundation models when one has limited control over the training process.

\section{Appendix to Theoretical Results}
\label{app:Proof of Theorem 2}

\subsection{Relevant Mathematical Results}
\label{app:A1}
PID~\citep{bertschinger2014quantifying,banerjee2018computing} provides a mathematical framework that decomposes the total information content $\mut{Y}{A, B}$ into four non-negative terms:
\begin{align} 
\mut{Y}{A,B} = \uni{Y}{B\given A} + \uni{Y}{A\given B} + \rdn{Y}{A,B}+ \syn{Y}{A,B}.
\end{align}

In addition to this equation, the PID terms also satisfy the following relationships~\citep{bertschinger2014quantifying,banerjee2018computing}:
\begin{align} 
\mut{Y}{A} = \uni{Y}{A\given B}  + \rdn{Y}{A,B}. \label{eq:PID2}
\end{align}
\begin{align} 
\mut{Y}{A|B} =  \uni{Y}{A\given B}  + \syn{Y}{A,B}. \label{eq:PID3}
\end{align}

Now, defining any one of the PID terms is sufficient to obtain all four by using these relationships. In this work, we use a popular definition of unique information from ~\citep{bertschinger2014quantifying,banerjee2018computing} as defined in Definition~\ref{def:bert_def}, which can be computed by solving a convex optimization problem~\citep{bertschinger2014quantifying,banerjee2018computing}.

One of the most desirable properties of this definition is that all four PID terms are non-negative.

\begin{lem}[Nonnegativity of PID]
All four PID terms $\uni{Y}{B\given A}$,  $\uni{Y}{A\given B}$,  $\rdn{Y}{A,B}$, and $ \syn{Y}{A,B}$ are nonnegative as per Definition~\ref{def:bert_def}. 
\label{lem:nonnegativity}
\end{lem}

This result is proved in \citet[Lemma 5]{bertschinger2014quantifying}.

\begin{lem}[Monotonicity under local operations on $B$] Let $B=f(B')$ where $f(\cdot)$ is a deterministic function. Then, we have:
$$\uni{Y}{B| A} \leq \uni{Y}{B'| A}   .$$
\label{lem:monotonicity_bob}
\end{lem}
This result is derived in \citet[Lemma 31]{banerjee2018computing}.

\begin{lem}[Monotonicity under adversarial side information]
For all $(Y,B,A,W)$, we have:
$$\uni{Y}{B| A,W} \leq \uni{Y}{B| A}.$$
\label{lem:monotonicity_eve}
\end{lem}

This result is derived in \citet[Lemma 32]{banerjee2018computing}.

\begin{lem}$\uni{Y}{B\given F}=0$ if and only if there exists a row-stochastic matrix $T\in [0,1]^{|\mathcal{F}|\times |\mathcal{B}|}$ such that:
$P_{YB}(Y=y,B=b)=\sum_{f \in \mathcal{F}}P_{YF}(Y=y,F=f)T(f,b)$ for all $y \in \mathcal{Y}$ and $b \in \mathcal{B}$. 
\label{lem:blackwell}
\end{lem}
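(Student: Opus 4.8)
The plan is to reduce the matrix condition to a conditional-independence statement about the optimizing distribution in Definition~\ref{def:bert_def}, and then translate that statement into the claimed factorization. Instantiating Definition~\ref{def:bert_def} with the spurious feature $B$ playing the role of $A$ and the core feature $F$ playing the role of the conditioning variable, we have $\uni{Y}{B\given F}=\min_{Q\in\Delta_P}\mutd{Q}{Y}{B\given F}$, where $\Delta_P=\{Q_{YBF}: Q_{YB}=P_{YB},\ Q_{YF}=P_{YF}\}$. Since $\Delta_P$ is a closed, bounded subset of the probability simplex (it is cut out by finitely many linear marginal constraints) and $Q\mapsto\mutd{Q}{Y}{B\given F}$ is continuous and nonnegative, the minimum is attained. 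Hence $\uni{Y}{B\given F}=0$ if and only if there exists $Q\in\Delta_P$ with $\mutd{Q}{Y}{B\given F}=0$, i.e. $Y\indep B\given F$ under $Q$. The remaining task is to show that the existence of such a $Q$ is equivalent to the existence of the row-stochastic matrix $T$.

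\textbf{($\Rightarrow$)} Given a minimizer $Q^\ast\in\Delta_P$ with $Y\indep B\given F$, I would set $T(f,b):=Q^\ast_{B\given F}(b\given f)$ for each $f$ with $Q^\ast_F(f)>0$, and define $T(f,\cdot)$ as any fixed probability vector when $Q^\ast_F(f)=0$. This $T$ is row-stochastic by construction. Using the conditional independence, $Q^\ast_{YBF}(y,b,f)=Q^\ast_{YF}(y,f)\,T(f,b)$; marginalizing over $f$ and invoking the constraints $Q^\ast_{YB}=P_{YB}$ and $Q^\ast_{YF}=P_{YF}$ yields exactly $P_{YB}(y,b)=\sum_{f\in\mathcal{F}}P_{YF}(y,f)\,T(f,b)$.

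\textbf{($\Leftarrow$)} Conversely, given a row-stochastic $T$ satisfying the identity, I would define $Q_{YBF}(y,b,f):=P_{YF}(y,f)\,T(f,b)$. Nonnegativity is immediate, and summing over all $(y,b,f)$ gives $1$ because each row of $T$ sums to one. I would then verify $Q\in\Delta_P$: summing out $b$ gives $Q_{YF}=P_{YF}$ (again since the rows of $T$ sum to one), and summing out $f$ gives $Q_{YB}=P_{YB}$ by hypothesis. Finally, this $Q$ factorizes as $Q_{YBF}(y,b,f)=P_F(f)\,P_{Y\given F}(y\given f)\,T(f,b)$, exhibiting $Y\indep B\given F$ under $Q$, so $\mutd{Q}{Y}{B\given F}=0$ and therefore $\uni{Y}{B\given F}=0$.

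\textbf{Main obstacle.} The only delicate points are bookkeeping rather than conceptual: guaranteeing that the minimum in Definition~\ref{def:bert_def} is genuinely attained (so that ``$\min=0$'' produces an actual zero-CMI distribution, which is handled by compactness of $\Delta_P$ and continuity of conditional mutual information), and treating conditioning events $\{F=f\}$ of probability zero, where $Q_{B\given F}(\cdot\given f)$ is undefined and the corresponding row $T(f,\cdot)$ may be chosen freely without affecting the displayed identity. Everything else is direct substitution using the two marginal constraints defining $\Delta_P$.
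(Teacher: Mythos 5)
Your proof is correct and follows essentially the same route as the paper's: both directions rest on the equivalence between $\uni{Y}{B\given F}=0$ and the existence of some $Q\in\Delta_P$ with $Y\indep B\given F$, with $T(f,b)=Q_{B\given F}(b\given f)$ in the forward direction and $Q_{YBF}(y,b,f)=P_{YF}(y,f)\,T(f,b)$ in the converse. Your treatment is in fact slightly more careful than the paper's, since you explicitly justify attainment of the minimum via compactness and handle rows with $Q_F(f)=0$, points the paper passes over silently.
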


\begin{proof} This result is from \cite{bertschinger2014quantifying}. Here, we include a proof for completeness.

If $\uni{Y}{B\given F}=0$, then we have: $\min _{Q \in \Delta_P} \mutd{Q}{Y}{B|F}=0$ where $\Delta_P=\{Q {\in} \Delta$ : $Q_{YF}(Y=y, F=f) =P_{YF}(Y=y, F=f)$ and $Q_{YB}(Y=y,B=b)=P_{YB}(Y=y, B=b)\}$. Thus, there exists a distribution $Q \in \Delta_P$ such that $Y$ and $B$ are independent given $F$ under the joint distribution $Q$. Then, we have
\begin{align}
P_{YB}(Y=y,B=b) & =Q_{YB}(Y=y,B=b)  \\
&= \sum_{f \in \mathcal{F}} Q_{YFB}(Y=y,F=f,B=b)  \\
& = \sum_{f \in \mathcal{F}} Q_{B|YF}(B=b|Y=y,F=f)Q_{YF}(Y=y,F=f) \\
& \overset{(a)}{=} \sum_{f \in \mathcal{F}} Q_{B|YF}(B=b|Y=y,F=f)P_{YF}(Y=y,F=f) \\
& \overset{(b)}{=} \sum_{f \in \mathcal{F}} Q_{B|F}(B=b|F=f)P_{YF}(Y=y,F=f) \\
& \overset{(c)}{=} \sum_{f \in \mathcal{F}} T(f,b)P_{YF}(Y=y,F=f). 
\end{align}
Here, (a) holds because $P_{YF}=Q_{YF}$ for all $Q \in \Delta_P$, (b) holds because under joint distribution $Q$, variables $Y$ and $B$ are independent given $F$, and (c) simply chooses $T(f,b)=Q_{B|F}(B=b|F=f)$ which is a function of $(f,b)$ and will lead to a row-stochastic matrix $T$ since $\sum_{b \in \mathcal{B}}T(f,b)=\sum_{b \in \mathcal{B}} Q_{B|F}(B=b|F=f)=1.$

Next, we prove the converse.
Suppose, such a row-stochastic matrix $T$ exists such that: $$P_{YB}(Y=y,B=b)=\sum_{f \in \mathcal{F}} T(f,b)P_{YF}(Y=y,F=f).$$
Now, we can define a joint distribution $Q^*$ such that:
\begin{equation}
Q^*(Y=y,F=f,B=b)= P_{YF}(Y=y,F=f)T(f,b).
\end{equation}

We can show that $Q^*$ is a valid probability distribution since $T$ is row stochastic.
\begin{align}
\sum_{y \in \mathcal{Y}}\sum_{b \in \mathcal{B}}\sum_{f \in \mathcal{F}}Q^*(Y=y,F=f,B=b) & = \sum_{y \in \mathcal{Y}}\sum_{b \in \mathcal{B}}\sum_{f \in \mathcal{F}} P_{YF}(Y=y,F=f)T(f,b) \nonumber \\
& = \sum_{y \in \mathcal{Y}}\sum_{f \in \mathcal{F}} P_{YF}(Y=y,F=f) \left(\sum_{b \in \mathcal{B}} T(f,b) \right) \nonumber \\
& = \sum_{y \in \mathcal{Y}}\sum_{f \in \mathcal{F}} P_{YF}(Y=y,F=f) = 1.
\end{align}

Also, we can show that $Q^* \in \Delta_P$ since:
\begin{align}
Q^*_{YB}(Y=y,B=b) = \sum_{f \in \mathcal{F}} P_{YF}(Y=y,F=f)T(f,b) = P_{YB}(Y=y,B=b),
\end{align}
which holds since such a row-stochastic matrix $T$ exists. Also, we have:
\begin{align}
Q^*_{YF}(Y=y,F=f) =  \sum_{b \in \mathcal{B}} P_{YF}(Y=y,F=f)T(f,b) = P_{YF}(Y=y,F=f),
\end{align}
which holds since $T$ is row-stochastic.

Then,
$\uni{Y}{B|F} =\min_{Q \in \Delta_P} \mutd{Q}{Y}{B|F} \leq \mutd{Q^*}{Y}{B|F} = 0. $

\end{proof}

\subsection{Proof of Theorem~\ref{thm:uniq_info}}

For the first claim, notice that $\uni{Y}{B\given F} = \mut{Y}{B} -\rdn{Y}{B,F}$ (from \eqref{eq:PID2}) and $\rdn{Y}{B,F}\geq 0$ (non-negativity of PID, see Lemma~\ref{lem:nonnegativity}). Thus,

$$ \uni{Y}{B\given F} \leq \mut{Y}{B}. $$ 

For the second claim, we will use Lemma~\ref{lem:blackwell}. $\uni{Y}{B|F}=0$ if and only if there exists a row-stochastic matrix $T\in [0,1]^{|\mathcal{F}|\times |\mathcal{B}|}$ such that:
$P_{YB}(Y=y,B=b)=\sum_{f \in \mathcal{F}}P_{YF}(Y=y,F=f)T(f,b)$ for all $y \in \mathcal{Y}$ and $b \in \mathcal{B}$. The existence of such a row-stochastic matrix is equivalent to Blackwell Sufficiency as per Definition~\ref{defn:blackwell} from \citep{blackwell1953equivalent}.

For the third claim, first observe that if $B'=B \cup W$, then $B$ can be written as a local operation on $B'$, i.e., $B=f(B')$. Thus, from Lemma~\ref{lem:monotonicity_bob}, we have:

\begin{equation}
\uni{Y}{B\given F} \leq \uni{Y}{B'\given F}. \label{eq:bob}
\end{equation}

Next, observe that since $F'=F\backslash W$, then from Lemma~\ref{lem:monotonicity_eve}, we have:
\begin{equation}
\uni{Y}{B'\given F}  = \uni{Y}{B'\given F',W} \leq \uni{Y}{B'\given F'}.\label{eq:eve}
\end{equation}

Combining \eqref{eq:bob} and \eqref{eq:eve}, we have the claim $$\uni{Y}{B\given F} \leq\uni{Y}{B'\given F'} .$$

\subsection{Proof of Additional Results}

\subsubsection{Proof of Lemma~\ref{lemma:redundancy}}

\begin{proof}[Proof of Lemma~\ref{lemma:redundancy}]

Here, $B=Y+N$ and $F=Y+N$ where $Y$ and $N$ are independent. Any optimal predictor is a function of the inputs $F$ and $B$, i.e., $\hat{Y}=f(F,B)$. Since $F=B$, this function can always be rewritten as a function of $B$ alone or $F$ alone.

Next, we will show that only the redundant information $\rdn{Y}{B,F}$ is positive and all other PID terms $\uni{Y}{B|F}$, $\uni{Y}{F|B}$, and $\syn{Y}{F,B}$ are zero.

Here $\mut{Y}{B|F}=\mut{Y}{F|B}=0$ since $B=F$. 

$$\mut{Y}{B|F} = H(B|F)-H(B|Y,F) = 0 .$$

According to the Definition~\ref{def:bert_def} and non-negativity of PID terms, $\uni{Y}{B|F} =\mut{Y}{B|F} -\syn{Y}{F,B} \leq \mut{Y}{B|F} = 0$. 

Similarly, we have, $\uni{Y}{F|B}\leq \mut{Y}{F|B} = 0$.

Then, $\syn{Y}{F,B} = \mut{Y}{F|B}  - \uni{Y}{F|B} $ (from \eqref{eq:PID3}) is also $0$.

Now, $\rdn{Y}{B,F} = \mut{Y}{B} - \uni{Y}{B|F} = \mut{Y}{B} = H(Y) - H(Y|B)$ which is positive as long as there is a significant dependence between $Y$ and $B$. 
\end{proof}

\subsubsection{Proof of Lemma~\ref{lemma:synergy}}

We first include another lemma that will be useful in proving our main result. 

\begin{lem}[Noisy Feature]
\label{app:lemma3}
Let $A=Y+N$ where $Y\sim {Bern}(1/2)$ is a random variable taking values $+1$ or $-1$ and the noise $N\sim \mathcal{N}(0, \sigma^2_N)$ is a Gaussian random variable independent of $Y$. Then, the mutual information $$\mut{Y}{A} \leq \frac{1}{2} \log_2{\left(1+ \frac{1}{\sigma^2_N} \right)} .$$
\end{lem}
\begin{proof}
%Let, $Y$ is a standard Gaussian random variable $Y\sim \mathcal{N}(0,\sigma^2_Y )$.
\begin{align}
\mut{Y}{A} = H(A)-H(A|Y)
&=H(Y+N)-H(Y+N|Y)\\
&=H(Y+N)-H(N|Y)\\
&=H(Y+N)-H(N), \text{  since } N\indep Y\\
&\overset{(a)}{\leq} \frac{1}{2} \log_2 2\pi e\left(1+\sigma_N^2\right)-\frac{1}{2} \log_2 2\pi e\left(\sigma_N^2\right)\\
&=\frac{1}{2} \log_2 \left( 1 + \frac{1}{\sigma_N^2} \right).
\end{align}
Here (a) holds because the entropy of $Y+N$ is bounded by $\frac{1}{2} \log_2 2\pi e\left(1+\sigma_N^2\right)$ (proved in \citet[Theorem 8.6.5]{cover2012elements}). We also refer to \citet[Chapter 9]{cover2012elements} for a discussion on Gaussian channels.
\end{proof}

If we keep the distribution of $Y$ fixed and vary the noise variance $\sigma_N^2$, then we will observe a decreasing trend of $\mut{Y}{B}$ with increasing $\sigma_N^2$.  Fig.\ref{Fig:MIvsN} shows the exact trend where $Y$ is a Bernoulli random variable.

\begin{figure*}[htbp!]
   \centering
    \includegraphics[height=130pt]{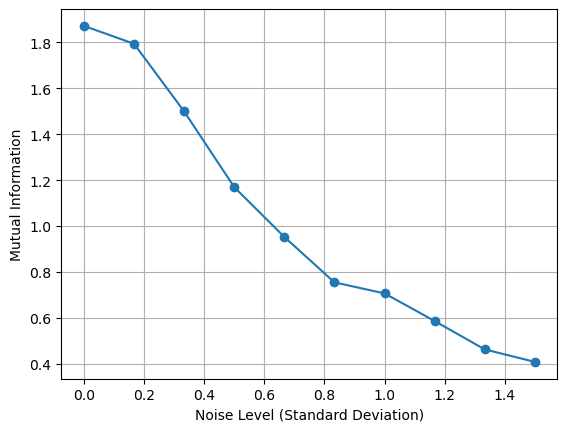} 
    \caption{Mutual Information vs. Noise Level ($Y$ is Bernoulli)}
     \label{Fig:MIvsN}
\end{figure*}

\begin{proof}[Proof of Lemma~\ref{lemma:synergy}]

Here $B = N$ and $F=Y+N$ where $Y\sim {Bern}(1/2)$ takes values $+1$ or $-1$, and the noise $N\sim \mathcal{N}(0, \sigma^2_N)$ with $N\indep Y$ and $\sigma^2_N \gg 1 $. 

First observe that the predictor $\hat{Y}=f(B,F)=F-B=Y$. Thus, it is perfectly predictive of $Y$, and is an optimal predictor.

Now, we will compute the values of the PID terms and show that $\syn{Y}{B,F}>0$ and all the other three PID terms are negligible. 

Since $B \indep Y$, we have $\mut{Y}{B}=0$. 

Since $F= Y+N$, we use Lemma~\ref{app:lemma3} to first show that: $\mut{Y}{F} \leq \frac{1}{2} \log_2 \left( 1 + \frac{1}{\sigma_N^2} \right)$. Now, as the variance $\sigma_N^2$ becomes high, we have $\mut{Y}{F}\approx 0$.

Since $N\indep Y$, we have $\mut{Y}{B} = 0$. Now, from \eqref{eq:PID2}, we have \begin{equation}
\mut{Y}{B} = \uni{Y}{B|F}+\rdn{Y}{B,F} = 0.
\end{equation}

According to Lemma~\ref{lem:nonnegativity}, $\uni{Y}{B|F}$ and $\rdn{Y}{B,F}$ are nonnegative. As their summation is $0$, each term should be $0$ as well, i.e., $\uni{Y}{B|F} = 0$ and $\rdn{Y}{B,F} = 0$. 

Again, since $N$ has a high variance, we have (from Lemma~\ref{app:lemma3}):
\begin{equation}
\mut{Y}{F} \leq \frac{1}{2} \log_2{\left(1+ \frac{1}{\sigma^2_N} \right)} \approx 0.
\end{equation}

This leads to $\uni{Y}{F|B} = \mut{Y}{F} - \rdn{Y}{B,F} \leq \frac{1}{2} \log_2{\left(1+ \frac{1}{\sigma^2_N} \right)} \approx 0$.

However, $\mut{Y}{F|B}= H(Y|B)-H(Y|B,F) = H(Y|N) - H(Y|Y+N,N) = H(Y)$ which is positive and significant. This holds because $H(Y|Y+N,N)=0$ since $Y$ is completely determined by $Y+N$ and $N$ together.

Now,

\begin{equation}
\syn{Y}{B,F} = \mut{Y}{F|B} - \uni{Y}{F|B} \geq H(Y) - \frac{1}{2} \log_2{\left(1+ \frac{1}{\sigma^2_N} \right)} \approx H(Y).
\end{equation}

\end{proof}

\subsubsection{Proof of Lemma~\ref{lemma:bayes_decision}}

\begin{proof}[Proof of Lemma~\ref{lemma:bayes_decision}]

Here, the input feature $X=(F,B)$. Observe that we have the following conditional distributions:
$X|_{Y=0}\sim\mathcal{N}([0\;0], \begin{bmatrix} \sigma_{N_F}^2 & 0 \\
0 & \sigma_{N_B}^2 
\end{bmatrix})$,  
and  $X|_{Y=1}\sim\mathcal{N}([1\;1], \begin{bmatrix} \sigma_{N_F}^2 & 0 \\
0 & \sigma_{N_B}^2 
\end{bmatrix})$. For simplicity, assume $P(Y=0)=P(Y=1)$. We let $\Sigma = \begin{bmatrix} \sigma_{N_F}^2 & 0 \\
0 & \sigma_{N_B}^2 
\end{bmatrix}$.

For the Bayes optimal classifier at the decision boundary, we have:
\begin{align*}
& P(X|Y=0)  = P(X|Y=1) \\
& \Rightarrow \log(P(X|Y=0)) = \log(P(X|Y=1))\\
& \Rightarrow -\frac{1}{2} X\Sigma^{-1}X^\top = -\frac{1}{2} (X-[1\; 1])\Sigma^{-1}(X-[1\; 1])^\top\\
& \Rightarrow  \frac{\|F\|_2^2}{\sigma_{N_F}^2}+\frac{\|B\|_2^2}{\sigma_{N_B}^2} =\frac{\|F-1\|_2^2}{\sigma_{N_F}^2}+\frac{\|B-1\|_2^2}{\sigma_{N_B}^2}\\
& \Rightarrow \frac{F}{\sigma_{N_F}^2}+\frac{B}{\sigma_{N_B}^2} =\frac{1}{2\sigma_{N_F}^2}+\frac{1}{2\sigma_{N_B}^2}\\
\end{align*}

This is the decision boundary for the Bayes optimal classifier. Thus, we can show that when $\sigma_{N_B}^2 \gg \sigma_{N_F}^2$, the boundary relies heavily on core feature $F$. Similarly, when $\sigma_{N_F}^2 \gg \sigma_{N_B}^2$, the boundary relies heavily on spurious feature $B$. Also refer to Fig.~\ref{fig:canonical} (first two cases) for a pictorial illustration of how the optimal classifier behaves. 

Next, observe that when $\sigma_{N_F}^2 \gg \sigma_{N_B}^2$, we have $\mut{Y}{B} > \mut{Y}{F}$ with strict equality (see Lemma~\ref{app:lemma3}). 

From the definition of PID, 
$\mut{Y}{B} = \uni{Y}{B\given F}  + \rdn{Y}{B,F}$ and $\mut{Y}{F} = \uni{Y}{F \given B}  + \rdn{Y}{B,F}.$

Since $\mut{Y}{B} > \mut{Y}{F}$, we therefore have:  

$$\uni{Y}{B \given F}  + \rdn{Y}{B,F} > \uni{Y}{F \given B}  + \rdn{Y}{B,F}.$$

This leads to $\uni{Y}{B \given F} > \uni{Y}{F \given B} \geq 0$ since each PID term is nonnegative.

\end{proof}
\subsection{Additional Details on Dimensionality Reduction}
\label{app:dimension_reduction}
The representation loss is the mean square error between the input of the encoder $x$ and the output of the decoder $x'$ defined as $L_r = \|x-x'\|^2_2$. The cluster centers ${\{\mu_j}\}^K_1$ (trainable weights of clustering layer) and embedded point $z_i$ (output of the encoder) are used to calculate the soft label
$q_{ij} =\frac{(1+\|z_i-\mu_j\|^2)^{-1}}{\sum_{j}(1+\|z_i-\mu_j\|^2)^{-1}}$
where $q_{ij}$ is the $j$th entry of the soft label $q_i$, denoting the probability of $z_i$ belonging to cluster $\mu_j$. %~\citep{van2008visualizing}.
%The loss $L_c = KL(P||Q) = \sum_{i} \sum_{j}p_{ij}log\frac{p_{ij}}{q_{ij}} \quad\text{and}\quad p_{ij}=\frac{\frac{q^2_{ij}}{\sum_{i} q_{ij}}}{\sum_{j}(\frac{q^2_{ij}}{\sum_{i}q_{ij}})} $ where $P$ is the target distribution. 
The clustering loss $L_c$ is the KL divergence between the soft assignments ($q_i$) and an auxiliary distribution ($p_i$). First, the autoencoder is pre-trained using only $L_r$ to initialize the auxiliary distribution, and the cluster centers are initialized by performing k-means on the embeddings of all images. After pretaining, the cluster centers and autoencoder weights are updated with the joint loss $L$ iteratively while the auxiliary distribution is only updated after $T$ iterations.

\section{Appendix to Experiments}
\label{app:Experimental Setup}

This section includes additional results and figures for a more comprehensive understanding.

\subsection{Additional Details on Automatic Segmentation of Features}
\label{app:clipseg}

Segmentation, a component of our Spurious Disentangler, plays a pivotal role in identifying core features from spurious ones. Identifying spurious features (pixels) without any additional information is challenging in image datasets, particularly if they lack group labels. However, in supervised classification tasks, the availability of target labels corresponding directly to the goal of the classification task (and hence some partial knowledge of what the core features should be, if not the exact pixels) often offers a practical workaround. Specifically, one can leverage automatic segmentation to at least perform object detection and choose the most relevant objects as the ``core''. Then, the regions of an image not associated with the ``core'' objects can often be considered a subset of spurious features.

Advances in Open-Vocabulary Semantic Segmentation (OVSS) have significantly reduced the dependence on task-specific training by enabling generalization to unseen categories without requiring labeled data. To leverage these advancements, we employ CLIPSeg~\citep{clipseg}, a state-of-the-art OVSS model, to generate masks for various objects in a zero-shot manner using partial knowledge of the classification task in mind. For instance, in the Waterbird dataset, we specify the prompt "bird" to obtain a mask for the bird object. This approach utilizes publicly available fine-grained weights, enabling efficient and accurate segmentation without additional labeled data. The generated mask is applied to the original image to extract the foreground, while the background is obtained by multiplying the original image by $1-mask$, as illustrated in Fig.~\ref{fig:clipseg_out}.

Thus, our proposed technique of dataset evaluation can be applied in conjunction with such automatic segmentation methods to any image dataset where the group information is not available, enabling us to first identify an approximation of the core features using partial knowledge of the target objects for the classification task, and then explain the nature of spurious patterns.

\subsection{Additional Results}
\label{app:iou}
Our explainability framework is preemptive or anticipative of spuriousness using just the dataset before training the model. The goal of our experiments is to show broad agreement between our anticipations from the dataset before training any model and the post-training behavior of actual models (when trained regularly to optimize performance without doing anything else specifically targeted towards avoiding spurious features). Apart from Worst-Group Accuracy, we also observe the Grad-CAM visualizations to check if the model demonstrates a stronger emphasis on the relevant core features or not (see Fig.~\ref{waterbird_grad},~\ref{CelebA_grad},~\ref{dominoes_grad}). To further justify this, we calculate the intersection-over-union (IoU) metric~\citep{iou} over the entire test Waterbird dataset. Table~\ref{tab:IoU} shows that when the dataset is modified from unbalanced to the other variants, the IoU score increases. The IoU score is calculated using the ground-truth segmentation masks of birds and the masks obtained from the Grad-CAM explanation.  
%\clearpage
\begin{table}[htbp!]
\centering
\caption{Intersection over Union (IoU) between ground truth and Grad-CAM masks on the \textsc{Waterbird} dataset}
\label{tab:IoU}
\renewcommand{\arraystretch}{1.2}
\resizebox{0.88\textwidth}{!}{%
\begin{tabular}{lccccc}
\toprule
\textbf{Test Group}   & \textbf{Unbalanced} & \textbf{Class Balanced} & \textbf{Group Balanced} & \textbf{Addition} & \textbf{Concatenation} \\
\midrule
Minority Group        & 0.22                & 0.29                     & 0.24                    & 0.28              & 0.32                   \\
All Groups            & 0.19                & 0.23                     & 0.22                    & 0.29              & 0.30                   \\
\bottomrule
\end{tabular}}
\end{table}

Table~\ref{tab:comparison} shows a comparison between our proposed measure of spuriousness $M_{sp}$ and other possible measures.

\begin{table}[!htbp]
\centering
\caption{Comparison of the proposed spuriousness measure ($M_{sp}$) with alternative metrics across dataset variants}
\label{tab:comparison}
\renewcommand{\arraystretch}{1.2}
\resizebox{0.95\textwidth}{!}{%
\begin{tabular}{l l c c c c c}
\toprule
\textbf{Dataset} & \textbf{Measure} & \textbf{Unbalanced} & \textbf{Class Balanced} & \textbf{Group Balanced} & \textbf{Addition} & \textbf{Concatenation} \\
\midrule
\multirow{3}{*}{\textsc{Waterbird}}    
  & $\mut{Y}{B}$                      & 0.1726 & 0.0315 & 0.0028 & 0.0005 & 0.0002 \\
  & $\mut{Y}{B} - \mut{Y}{F}$         & 0.1669 & 0.0298 & -0.0089 & -0.0052 & -0.0054 \\
  & Proposed $M_{sp}$                 & 0.1486 & 0.0185 & -0.0322 & -0.0208 & -0.0195 \\
\midrule
\multirow{3}{*}{\textsc{Dominoes 1.0}} 
  & $\mut{Y}{B}$                      & 0.1882 & —      & 0.0005 & 0.0010 & 0.0010 \\
  & $\mut{Y}{B} - \mut{Y}{F}$         & 0.1728 & —      & -0.0010 & -0.0203 & -0.0144 \\
  & Proposed $M_{sp}$                 & 0.1660 & —      & -0.0165 & -0.0279 & -0.0207 \\
\midrule
\multirow{3}{*}{\textsc{Dominoes 2.0}} 
  & $\mut{Y}{B}$                      & 0.5913 & —      & 0.2610 & 0.0002 & 0.0001 \\
  & $\mut{Y}{B} - \mut{Y}{F}$         & 0.5619 & —      & 0.2462 & -0.0426 & -0.0477 \\
  & Proposed $M_{sp}$                 & 0.5557 & —      & 0.2237 & -0.0501 & -0.0574 \\
\midrule
\multirow{3}{*}{\textsc{CelebA}}       
  & $\mut{Y}{B}$                      & 0.0238 & 0.0005 & 0.0151 & —      & —      \\
  & $\mut{Y}{B} - \mut{Y}{F}$         & -0.3038 & -0.3713 & -0.4051 & —      & —      \\
  & Proposed $M_{sp}$                 & -0.3091 & -0.3775 & -0.4797 & —      & —      \\
\midrule
\multirow{3}{*}{\textsc{Spawrious}}    
  & $\mut{Y}{B}$                      & 0.0437 & —      & 0.0096 & —      & —      \\
  & $\mut{Y}{B} - \mut{Y}{F}$         & 0.0012 & —      & -0.0056 & —      & —      \\
  & Proposed $M_{sp}$                 & -0.0007 & —      & -0.0176 & —      & —      \\
\bottomrule
\end{tabular}}
\end{table}

\subsection{Additional Details on Clustering}
At the dimensionality reduction stage, we must choose an appropriate number of clusters. We compute PID values for cluster counts of 5, 10, and 20. As shown in Table~\ref{tab:cluster}, the PID components —and consequently the spuriousness measure $M_{sp}$— are sensitive to the number of clusters. However, when moving from the unbalanced to the class-balanced setting, $M_{sp}$ consistently decreases across all cluster counts, indicating that essential information is retained despite dimensionality reduction. A similar sensitivity analysis on the CMNIST dataset (see Fig. \ref{fig:cmnist_clusters}) confirms this pattern. We observe that for CMNIST with two target classes, the computational time increases significantly — from approximately 6.8 seconds (cluster size 5) to 470.63 seconds (cluster size 50) — when the flip rate is set to $0\% $ (see Table~\ref{tab:clustering_time}). However, most importantly, we observe that for more than 10 clusters, the convex optimization for PID calculation sometimes fails to converge even after 50,000 iterations. Based on these results and previous work \citep{guo2017deep}, we choose 10 clusters as a trade-off between preserving sufficient information and ensuring computational efficiency and faster convergence.

\begin{table}[!htbp]
\centering
\caption{PID components for the Waterbird dataset across varying cluster sizes under unbalanced and class-balanced conditions.}
\label{tab:cluster}
\begin{tabular}{@{}lccccc@{}}
\toprule
\multicolumn{6}{c}{\textbf{Unbalanced}} \\
\midrule
\# Clusters & $\rdn{Y}{F,B}$ & $\uni{Y}{B\mid F}$ & $\uni{Y}{F\mid B}$ & $\syn{Y}{F,B}$ & $M_{sp}$ \\
\midrule
5           & 0.0065         & 0.1220              & 0.0000                  & 0.0085         & 0.1135   \\
10          & 0.0057         & 0.1669             & 0.0000                  & 0.0184         & 0.1485   \\
20          & 0.0025         & 0.1736             & 0.0000                  & 0.0163         & 0.1573   \\
\\[-1.5ex]\midrule
\multicolumn{6}{c}{\textbf{Class Balanced}} \\
\midrule
\# Clusters & $\rdn{Y}{F,B}$ & $\uni{Y}{B\mid F}$ & $\uni{Y}{F\mid B}$ & $\syn{Y}{F,B}$ & $M_{sp}$ \\
\midrule
5           & 0.0008         & 0.0221             & 0.0000                  & 0.0012         & 0.0209   \\
10          & 0.0016         & 0.0300             & 0.0001             & 0.0114         & 0.0185   \\
20          & 0.0008         & 0.0128             & 0.0000                  & 0.0097         & 0.0031   \\
\bottomrule
\end{tabular}
\end{table}

\begin{figure}[!htbp]
    \centering
    \includegraphics[width=0.8\linewidth]{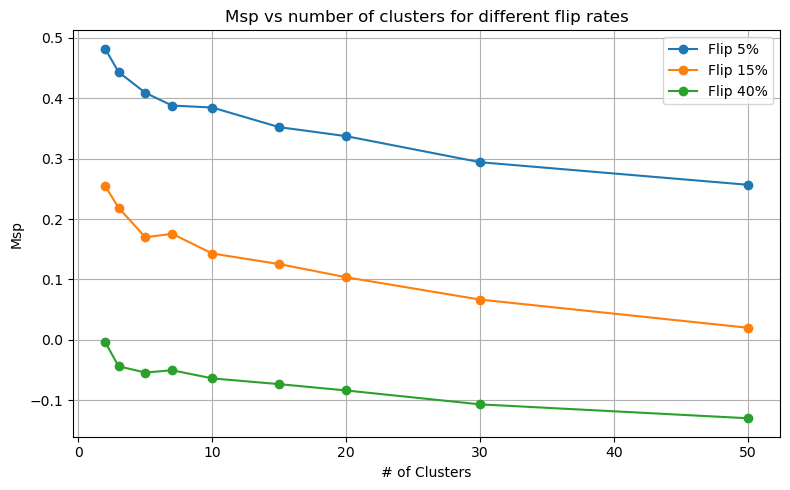}
    \caption{$M_{sp}$ versus number of clusters for different label flip percentages. As the number of clusters increases, $M_{sp}$ tends to decrease, and higher flip rates result in lower $M_{sp}$ values, indicating decreased spuriousness.}
    \label{fig:cmnist_clusters}
\end{figure}

\begin{table}[!htbp]
\centering
\caption{PID estimation time with varying number of clusters for CMNIST (flip rate $0\%$).}
\label{tab:clustering_time}
\begin{tabular}{@{}cc@{}}
\toprule
\textbf{Number of Clusters} & \textbf{Time (seconds)} \\ \midrule
5   & 6.80 \\
10  & 18.69 \\
20  & 65.12 \\
30  & 153.29 \\
50  & 470.63 \\ \bottomrule
\end{tabular}
\end{table}
%\clearpage
\subsection{Additional Details on Datasets}
\begin{table}[ht]
\centering
\caption{Summary of the datasets}
\label{tab:dataset-summary}
\renewcommand{\arraystretch}{1.2}
\begin{tabular}{llrrrr}
\toprule
\textbf{Dataset}              & \textbf{Split}     & \textbf{Group 00} & \textbf{Group 01} & \textbf{Group 10} & \textbf{Group 11} \\
\midrule
\multirow{3}{*}{\textsc{Waterbird}}    
                              & Train              & 3,498             & 184               & 56                & 1,057             \\
                              & Validation         & 467               & 466               & 133               & 133               \\
                              & Test               & 2,255             & 2,255             & 642               & 642               \\
\midrule
\multirow{2}{*}{\textsc{Dominoes 1.0}}  
                              & Train              & 3,750             & 1,250             & 1,250             & 3,750             \\
                              & Test               & 473               & 507               & 507               & 473               \\
\midrule
\multirow{2}{*}{\textsc{Dominoes 2.0}}  
                              & Train              & 3,000             & 500               & 1,250             & 300               \\
                              & Test               & 245               & 490               & 245               & 490               \\
\midrule

\multirow{2}{*}{\textsc{Adult}}         
                              & Train              & 10,116            & 15,930            & 1,214             & 6,929             \\
                              & Test               & 4,307             & 6,802             & 555               & 2,989             \\
\midrule                             
\multirow{2}{*}{\textsc{CelebA}}        
                              & Train              & 11,111            & 8,305             & 4,003             & 188               \\
                              & Test               & 1,391             & 997               & 525               & 18                \\
\midrule

\multirow{2}{*}{\textsc{Spawrious}}     
                              & Train              & 3,072             & 2,275             & 175               & 1,056             \\
                              & Test               & 96                & 893               & 2,993             & 2,112             \\

\bottomrule
\end{tabular}
\end{table}

\clearpage

\begin{table}[!htbp]
\centering
\caption{PID values across dataset variants}
\label{tab:pid-dataset-variants}
\renewcommand{\arraystretch}{1.2}
\begin{tabular}{llrrrrr}
\toprule
\textbf{Dataset}              & \textbf{Variation}      & $\rdn{Y}{F,B}$             & $\uni{Y}{B\given F}$       & $\uni{Y}{F\given B}$       & $\syn{Y}{F,B}$             & $M_{sp}$  \\
\midrule
\multirow{5}{*}{\textsc{Waterbird}}    
    & Unbalanced              & 0.0057 & 0.1669 & 0.0000 & 0.0184 &  0.1486 \\
    & Class Balanced          & 0.0016 & 0.0300 & 0.0001 & 0.0114 &  0.0185 \\
    & Group Balanced          & 0.0026 & 0.0001 & 0.0091 & 0.0233 & -0.0322 \\
    & Addition                & 0.0004 & 0.0001 & 0.0053 & 0.0156 & -0.0208 \\
    & Concatenation           & 0.0002 & 0.0001 & 0.0055 & 0.0140 & -0.0195 \\
\midrule
\multirow{2}{*}{\textsc{Adult}}    
    & Unbalanced              & 0.0374	&0.0000	&0.0661	&0.0267	&-0.0928 \\
    & Group Balanced          & 0&	0&	0.1163&	0.0090	&-0.1252\\
\midrule
\multirow{4}{*}{\textsc{Dominoes 1.0}} 
    & Unbalanced              & 0.0154 & 0.1728 & 0.0000 & 0.0068 &  0.1660 \\
    & Group Balanced          & 0.0003 & 0.0002 & 0.0013 & 0.0155 & -0.0165 \\
    & Addition                & 0.0009 & 0.0000 & 0.0203 & 0.0076 & -0.0279 \\
    & Concatenation           & 0.0009 & 0.0000 & 0.0144 & 0.0063 & -0.0207 \\
\midrule
\multirow{4}{*}{\textsc{Dominoes 2.0}} 
    & Unbalanced              & 0.0294 & 0.5619 & 0.0000 & 0.0061 &  0.5557 \\
    & Group Balanced          & 0.0148 & 0.2462 & 0.0000 & 0.0225 &  0.2237 \\
    & Addition                & 0.0001 & 0.0000 & 0.0426 & 0.0075 & -0.0501 \\
    & Concatenation           & 0.0001 & 0.0000 & 0.0477 & 0.0096 & -0.0574 \\
\midrule
\multirow{3}{*}{\textsc{CelebA}}       
    & Unbalanced              & 0.0238 & 0.0000 & 0.3038 & 0.0053 & -0.3091 \\
    & Class Balanced          & 0.0000 & 0.0000 & 0.3713 & 0.0063 & -0.3775 \\
    & Group Balanced          & 0.0151 & 0.0000 & 0.4051 & 0.0746 & -0.4797 \\
\midrule
\multirow{2}{*}{\textsc{Spawrious}}    
    & Unbalanced              & 0.0396 & 0.0041 & 0.0029 & 0.0019 & -0.0007 \\
    & Group Balanced          & 0.0089 & 0.0007 & 0.0063 & 0.0121 & -0.0176 \\
\bottomrule
\end{tabular}
\end{table}

\subsubsection{Both core and spurious features available}
\label{app:setup1}
\textsc{Waterbird:} The Waterbird dataset~\citep{WahCUB_200_2011} is a popular spurious correlation benchmark. The task is to classify the type of bird (waterbird $= 1$, landbird $= 0$). However, there exists a spurious correlation between the backgrounds (water $= 1$, land $= 0$) and the labels (bird type). The two types of backgrounds and foregrounds result in a total of four groups. A summary of the Waterbird dataset is given in Table \ref{tab:dataset-summary}, and see Fig.~\ref{fig:waterbird_example} for examples of the dataset. We additionally consider four random training splits with group counts [2096 212 102 2385], [3493 213 58 1031], [2632 420 150 1593], and [1363 942 1469 1021] to observe the correlation between the measure of spuriousness $M_{sp}$ and W.G. Acc. (\%) (see Fig.\ref{fig:waterbird_PID}) in diverse settings. We use Spurious Disentangler to calculate the PID values. The hyperparameters are as follows: a batch size of $64$, a learning rate of $0.001$, a CosineAnnealingLR scheduler, an Adam optimizer with a weight decay of $0.0001$, $50$ pretraining epochs, followed by $100$ epochs of additional training. When fine-tuning ResNet-50 we use the following hyperparameters: batch size of $64$, learning rate of $0.0001$, CosineAnnealingLR scheduler, stochastic gradient descent (SGD) optimizer with a weight decay of $0.0001$, binary cross-entropy as the loss function, and $100$ epochs. See Table~\ref{tab:pid-dataset-variants} for the details of the PID values, and Table~\ref{tab:worst} for the worst-group accuracies of different variants of the Waterbird dataset. 

\begin{figure}[ht!]
   \centering
    \includegraphics[height=250pt]{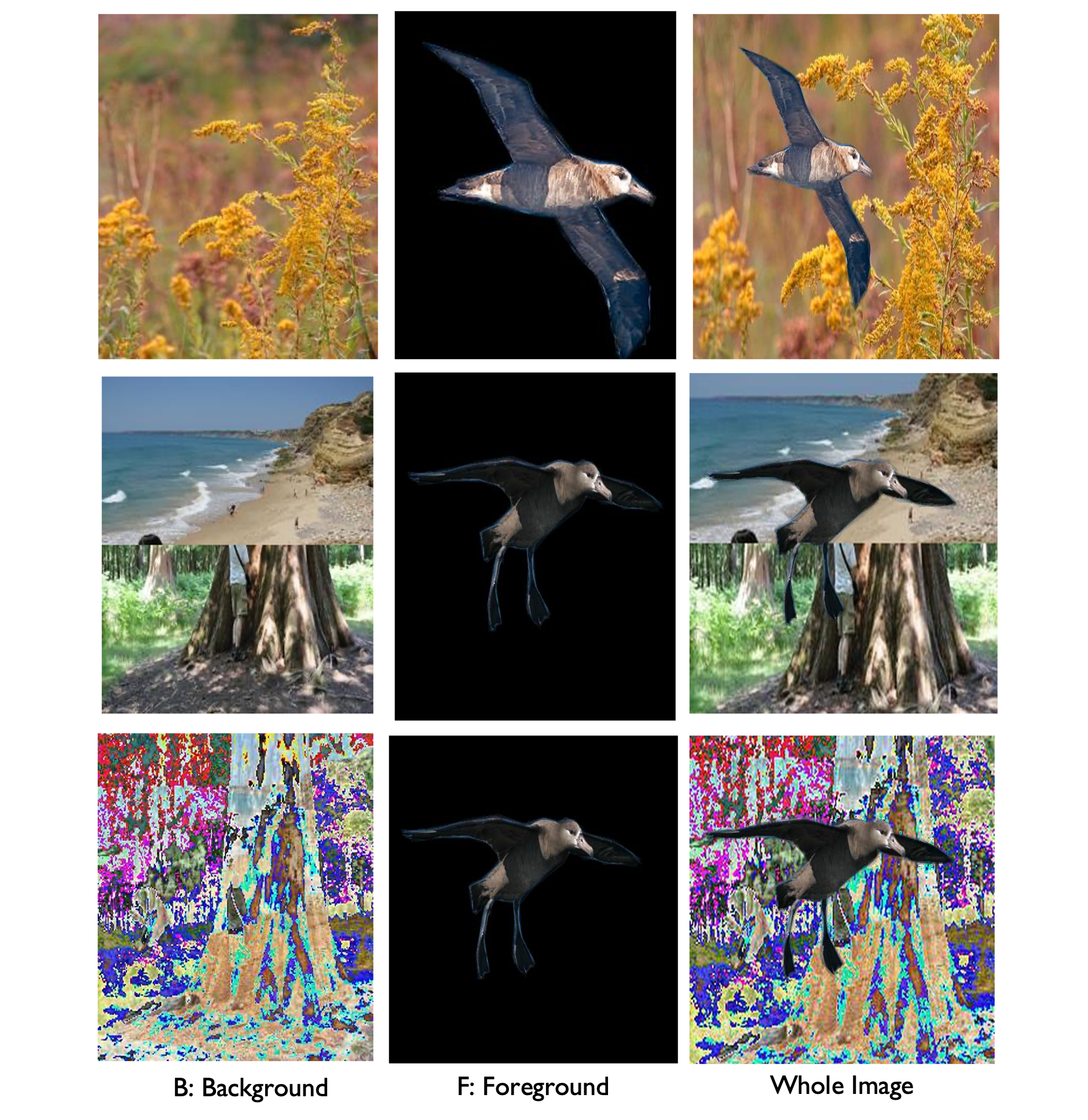} 
    \caption{Samples of Waterbird dataset (original, concatenation, and addition).}
     \label{fig:waterbird_example}
\end{figure}

\begin{table}[ht!]
\centering
\caption{Worst-group accuracy (\%) across dataset variants}
\label{tab:worst}
\renewcommand{\arraystretch}{1.2}
\resizebox{0.95\textwidth}{!}{%
\begin{tabular}{lccccc}
\toprule
\textbf{Dataset}      & \textbf{Unbalanced}       & \textbf{Class Balanced} & \textbf{Group Balanced} & \textbf{Addition}       & \textbf{Concatenation} \\
\midrule
\textsc{Waterbird}    & 45.92 ± 1.43             & 74.49 ± 0.58            & 85.82 ± 0.71            & 88.18 ± 2.17            & 92.60 ± 0.39            \\
\textsc{Dominoes 1.0} & 86.29 ± 4.44              & —                       & 90.19 ± 1.23            & 94.42 ± 0.24            & 96.06 ± 0.39            \\
\textsc{Dominoes 2.0} & 78.78 ± 1.02              & —                       & 88.06 ± 1.12            & 86.74 ± 1.22            & 90.72 ± 3.37            \\
\textsc{CelebA}       & 71.41 ± 0.81              & 85.29 ± 2.94            & 98.34 ± 1.66            & —                       & —                       \\
\textsc{Spawrious}    & 91.91 ± 1.94              & —                       & 95.24 ± 0.28            & —                       & —                       \\
\bottomrule
\end{tabular}}
\end{table}

\textsc{Dominoes:} Dominoes is a synthetic dataset created by combining handwritten digits (zero and one) from MNIST~\citep{deng2012mnist} and images of cars and trucks from CIFAR$10$~\citep{torontoCIFAR10CIFAR100} (digit $0$ or $1$ at the top, car ($= 0$) or truck ($= 1$) at the bottom of an image). We make two versions of this synthetic dataset, namely Dominoes $1.0$ and Dominoes $2.0$, inducing different degrees of sampling biases. The task is to classify whether the image contains a car or a truck; hence, the car or truck corresponds to the core features (foreground). On the other hand, the digits are considered as the spurious features (background). The summary of Dominoes 1.0 and Dominoes 2.0 is given in Table~\ref{tab:dataset-summary}. Fig.~\ref{fig:dominoes_example} shows the examples of original, addition, and concatenation variants of the dataset. For PID calculation, the hyperparameters are as follows: a batch size of $8$, a learning rate of $0.001$, a CosineAnnealingLR scheduler, an Adam optimizer with a weight decay of $0.0001$, $100$ pretraining epochs, followed by $50$ epochs of additional training. See Table~\ref{tab:pid-dataset-variants} for the details of PID values and $M_{sp}$.

\begin{figure}[ht!]
    \centering \includegraphics[width=0.80\columnwidth]{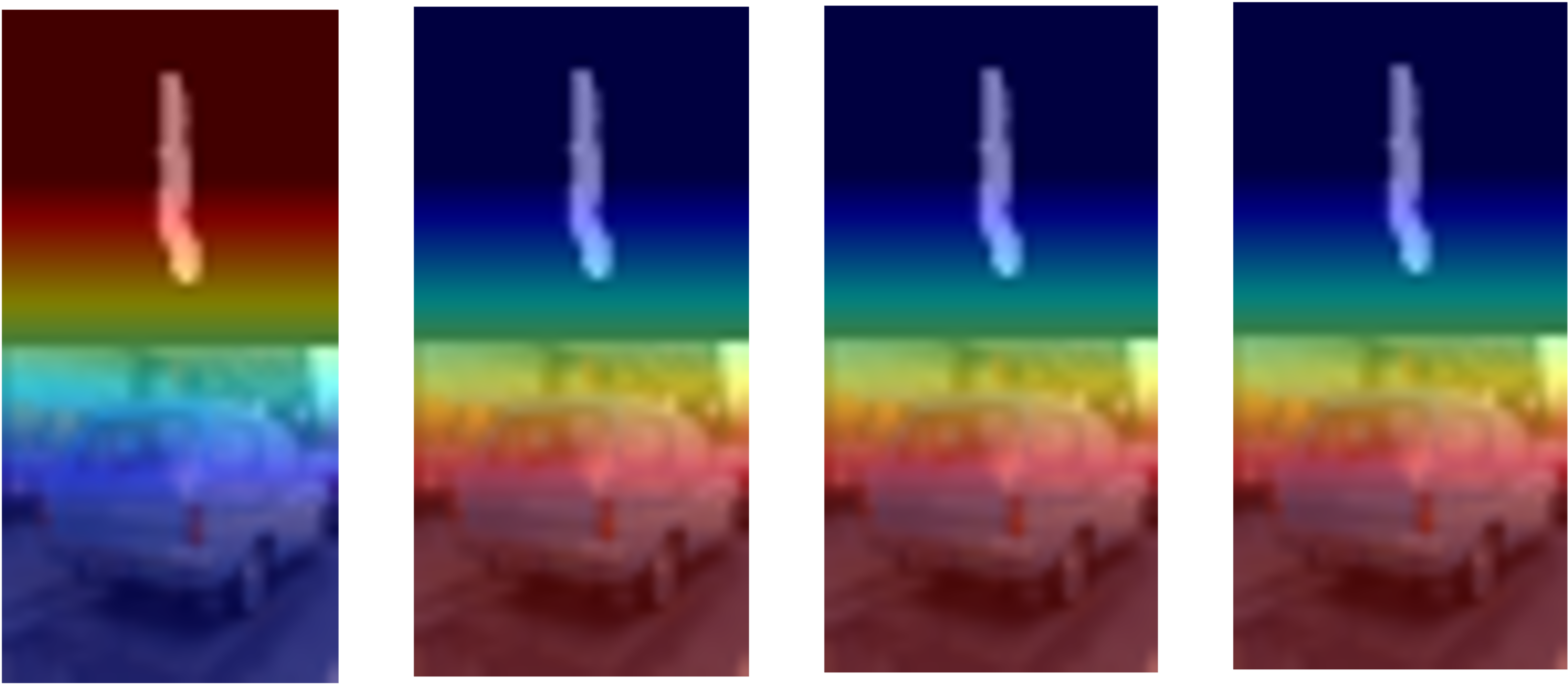}
   \caption{Examples of Grad-CAM images Dominoes dataset: Observe that for the unbalanced dataset (1st from left), the model adds more emphasis (red regions) to the digits (background) while in the group-balanced, addition and concatenation versions (2nd, 3rd, and 4th from left), the car (foreground) is more emphasized.} %\vspace{-10pt}
   \label{dominoes_grad}
\end{figure}

\begin{figure}[ht!]
   \centering
    \includegraphics[width=0.80\textwidth]{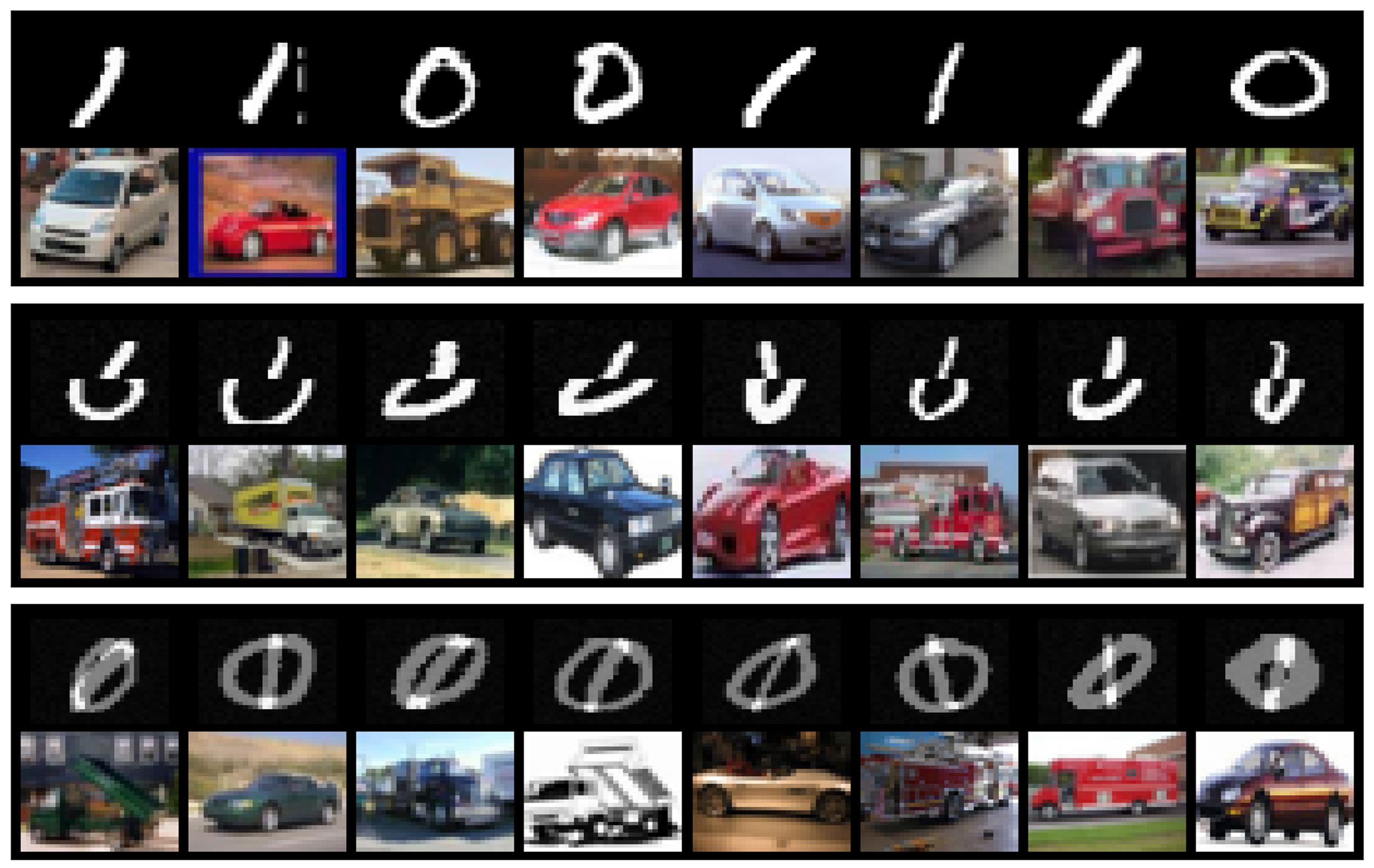} 
    \caption{Samples of Dominoes dataset (original, concatenation, and addition).}
     \label{fig:dominoes_example}
\end{figure}

\textsc{Tabular Dataset: Adult:} The applicability of our proposed framework goes beyond images and can also be applied for explainability on tabular datasets. We perform an experiment on the Adult~\citep{adult_2} dataset. The task is to predict whether the annual income of an individual exceeds \$50k per year or not ($>50k = 1$,$<=50k = 0$). Here we consider ``gender'' as a spurious feature vector (male $=1$, female $=0$) and ''age'', ''education-num'', ''hours-per-week'' jointly as a core feature matrix. 
After performing k-means clustering, we use the estimation module (DIT package) to calculate PID values with core features, spurious features, and the target label. Table~\ref{tab:pid-dataset-variants} shows the values for redundancy, unique information, and synergy. 

We train the XGBoost~\citep{chen2016xgboost} model for the prediction task and calculate the worst-group accuracy which corresponds to the accuracy of the minority group (see Table~\ref{tab:dataset-summary}, minority group 10 corresponds to female individuals with >50k income.).

\subsubsection{Only core features available}
\label{app:celebA}
\textsc{CelebA:} CelebA is another popular dataset for spurious correlation benchmarking, which consists of images of male-female celebrities. We use a subset of this dataset, namely CelebAMask-HQ~\citep{CelebAMask-HQ}, to utilize the segmentation mask of the hair while calculating the PID values. The objective is to identify blonde ($ = 1$) and non-blonde ($ = 0$) hair. However, there exists a spurious correlation between gender (men ($= 1$), women ($= 0$)) and the label, which makes the model focus on the face rather than the hair to classify the color of the hair~\citep{moayeri2023spuriosity}. We consider hair as the foreground and anything but hair as the background.  We do not perform background mixing for this dataset since it is not practical to add or concatenate two faces randomly. The summary of the CelebA~\citep{CelebAMask-HQ} dataset is given in Table~\ref{tab:dataset-summary}. The steps and hyperparameters for calculating PIDs are the same as in the Waterbird dataset. See Fig.~\ref{fig:celebA_example} for the examples of dataset samples. The details of PID values and worst-group accuracies for several variations of this dataset are shown in Table~\ref{tab:pid-dataset-variants} and Table~\ref{tab:worst}, respectively. 
\begin{figure}[ht!]
   \centering
    \includegraphics[height=4cm]{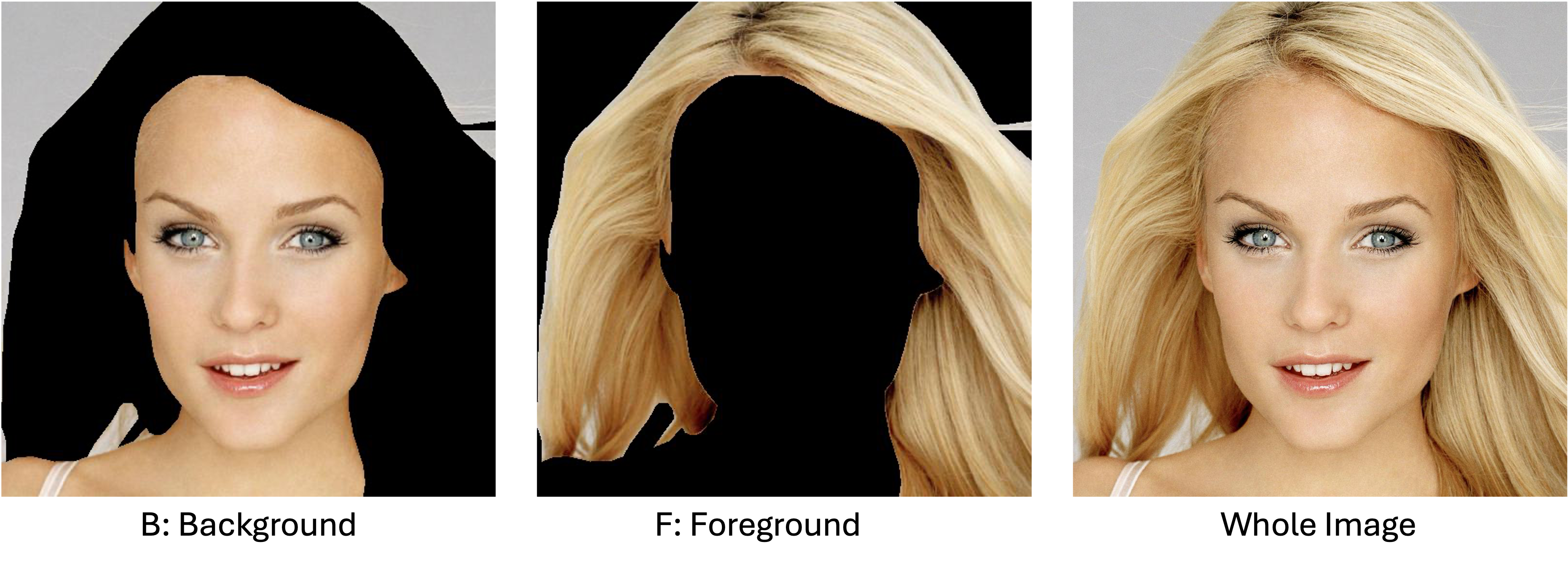} 
    \vspace{-10pt}
    \caption{Samples of the CelebA dataset.}
     \label{fig:celebA_example}
\end{figure}
\begin{figure}[ht!]
    \centering \includegraphics[height=4cm]{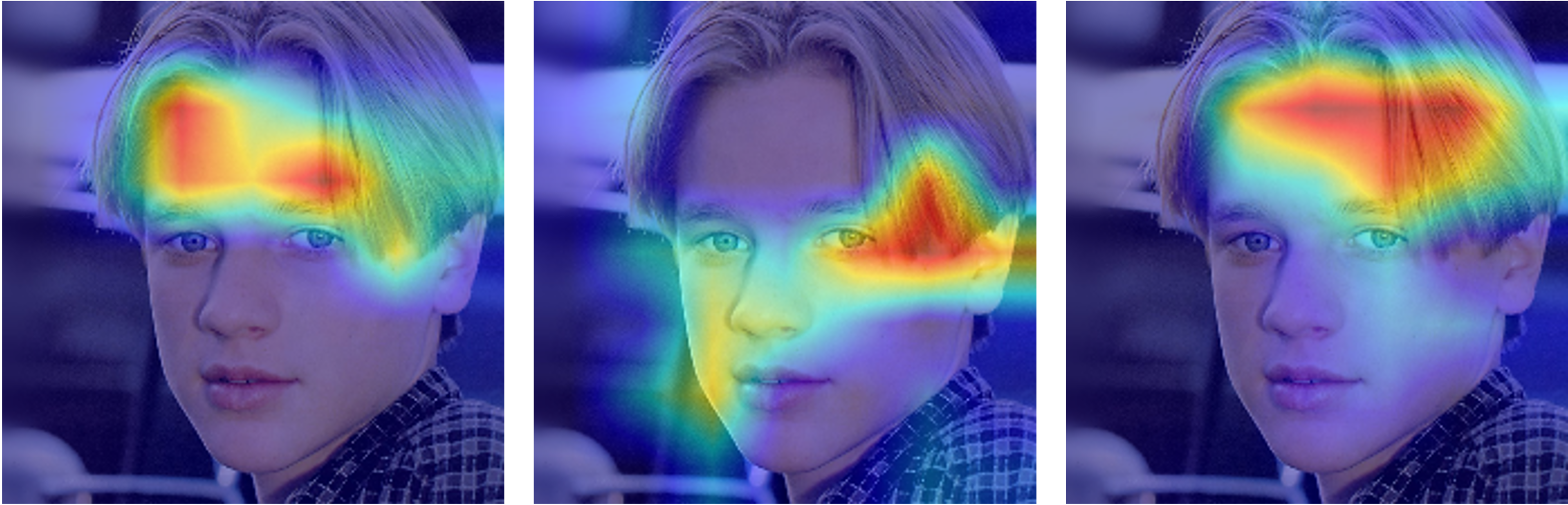}
   \caption{Examples of Grad-CAM images CelebA dataset: Observe that for the unbalanced dataset (1st from left), the model adds more emphasis (red regions) to the face (background) while in the class-balanced and group-balanced (2nd and 3rd), the hair (foreground) is more emphasized.} %\vspace{-10pt}
   \label{CelebA_grad}
\end{figure}

\subsubsection{Segmentation to obtain features}
\label{app:spawrious}
\textsc{Spawrious:} Spawrious~\citep{lynch2023spawrious} is a synthetic image dataset created by employing a text-to-image model. We use a subset of this dataset where we classify dog breeds - dachshund ($=0$) and labrador ($=1$). We select the subset so that most of the dachshunds are on beach ($=0$) backgrounds and the rest are on desert ($=1$) backgrounds. The summary of the subset of the Spawrious dataset~\citep{lynch2023spawrious} that we use for our experiment is given in Table~\ref{tab:dataset-summary}. The samples of this dataset are shown in Fig.~\ref{Fig:spawrious_example}. We use a segmentation model, namely Feature Pyramid Network (FPN)~\citep{FPN} with ResNet-$34$~\citep{resnet} encoder pre-trained with Oxford-IIIT Pet Dataset to create the segmentation mask of the dogs of our dataset. Using this mask, we separate the foreground ``dog'' from the background. After having background and foreground, we use principal component analysis (PCA)~\citep{PCA} followed by k-means clustering to have a discrete lower-dimensional representation. We do not use our autoencoder module since, for this dataset, a simpler dimensionality reduction also seems to have a low reconstruction loss. Then we use our estimation module to calculate the PID values and $M_{sp}$. Tables~\ref{tab:pid-dataset-variants} and Table~\ref{tab:worst} show all PID values along with the measure and the worst-group accuracy, respectively.

\begin{figure}[ht!]
   \centering
    \includegraphics[height=5cm]{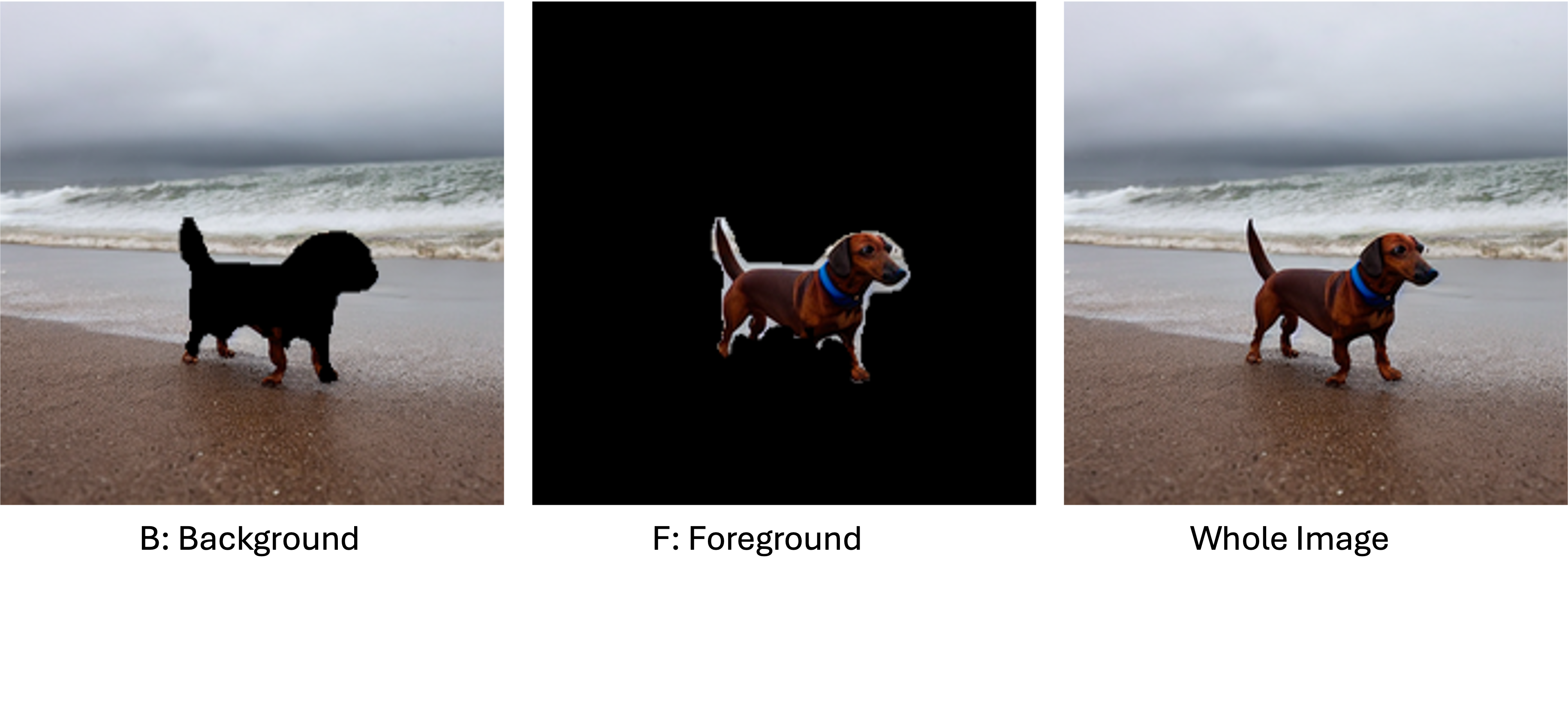} 
    \vspace{-15pt}
    \caption{Samples of the subset of the Spawrious dataset we use in this work.}
     \label{Fig:spawrious_example}
\end{figure}
\begin{figure}[ht!]
   \centering
    \includegraphics[height=2.5cm]{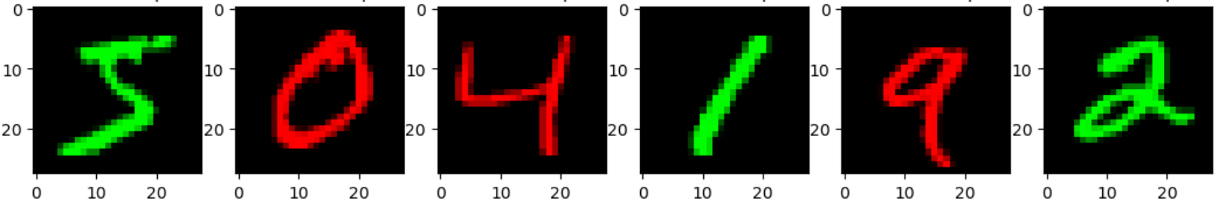} 
    \caption{Samples of the CMNIST dataset used in this work.}
     \label{fig:cmnist_example}
\end{figure}

\begin{table}[ht!]
\centering

\caption{Comparison of spuriousness measure $M_{sp}$ and worst-group accuracy (\%) for multiclass and binary classification on CMNIST under different flip probabilities.}
\label{tab:cmnist}
\begin{adjustbox}{max width=\textwidth}
\begin{tabular}{ccccc}
\toprule
\textbf{Flip (\%)} & \multicolumn{2}{c}{\textbf{Binary}} & \multicolumn{2}{c}{\textbf{Multiclass}} \\
& \textbf{$M_{sp}$} & W.G. Acc. (\%) & \textbf{$M_{sp}$} & W.G. Acc. (\%) \\
\midrule
0.0     &  0.58581 & 0.13$\pm$0.15   & 0.40220  & 0.19$\pm$0.14   \\
2.5   &  0.45287 & 78.77$\pm$1.18  & 0.27035  & 83.88$\pm$1.66  \\
5.0     &  0.38470 & 87.96$\pm$0.37  & 0.17788  & 86.96$\pm$2.97  \\
7.5   &  0.30933 & 92.94$\pm$0.54  & 0.07443  & 93.11$\pm$0.84  \\
10.0    &  0.24979 & 94.97$\pm$0.85  & 0.03366  & 93.33$\pm$0.97  \\
12.5  &  0.19956 & 95.00$\pm$0.43  & -0.00945 & 94.46$\pm$0.80  \\
15.0    &  0.14296 & 95.73$\pm$0.46  & -0.06050 & 94.98$\pm$0.37  \\
17.5  &  0.10727 & 96.13$\pm$0.37  & -0.11422 & 94.83$\pm$0.43  \\
20.0    &  0.07481 & 96.49$\pm$0.58  & -0.14608 & 95.66$\pm$0.39  \\
22.5  &  0.04553 & 97.21$\pm$0.28  & -0.19066 & 95.99$\pm$0.68  \\
25.0    &  0.02400 & 97.62$\pm$0.35  & -0.20659 & 96.24$\pm$0.05  \\
27.5  & -0.01175 & 97.73$\pm$0.38  & -0.22717 & 95.87$\pm$0.20  \\
30.0    & -0.01043 & 97.78$\pm$0.29  & -0.25008 & 96.05$\pm$0.52  \\
32.5  & -0.03903 & 98.04$\pm$0.25  & -0.27048 & 96.55$\pm$0.31  \\
35.0    & -0.05213 & 98.17$\pm$0.22  & -0.28146 & 96.86$\pm$0.53  \\
37.5  & -0.06233 & 98.34$\pm$0.36  & -0.28625 & 96.11$\pm$0.22  \\
40.0    & -0.06381 & 98.42$\pm$0.26 & -0.30848 & 96.22$\pm$0.55  \\
42.5  & -0.06186 & 98.57$\pm$0.30  & -0.32389 & 96.65$\pm$0.54  \\
45.0    & -0.06421 & 98.42$\pm$0.31  & -0.30355 & 96.89$\pm$0.08  \\
47.5  & -0.06254 & 98.56$\pm$0.13  & -0.30862 & 96.81$\pm$0.31  \\
50.0    & -0.07099 & 98.75$\pm$0.01  & -0.30195 & 96.27$\pm$ 0.00 \\
\bottomrule
\end{tabular}
\end{adjustbox}
\end{table}

\subsubsection{Non-spatial spuriousness}
\label{app:cmnist}
\textsc{Colored MNIST:} Colored MNIST (CMNIST) is a synthetic dataset derived from MNIST. Whereas MNIST images are grayscale, each image in CMNIST is colored either red or green in a way that correlates spuriously with the class label. We define two environments (one training, one test) from MNIST, transforming each example as follows: first, assign a preliminary binary label y to the image based
on the digit: $y = 0$ for digits $0-4$ and $y = 1$ for $5-9$. Second, sample the color id $z$ by flipping $y$ with probability $p_e$ (flip probability). In the test one, the flip probability is kept $0.9$. Finally, color the image red if $z = 1$ or green if $z = 0$ (see Fig. \ref{fig:cmnist_example}). See Table~\ref{tab:cmnist} for details of the spuriousness measure $M_{sp}$ and the worst-group accuracy. After performing k-means clustering, we use the estimation module to calculate the PID values.

All experiments are executed on NVIDIA RTX A4500. For the Waterbird dataset, the dimensionality reduction step takes around $50$ minutes each for the background and foreground, and for the PID estimation with~\citet{liang2023quantifying} it takes $<2$ seconds, and with \citet{dit} takes around $50$ seconds. For CMNIST, the whole process takes around $7-20$ seconds for the binary task and $15-50$ seconds for multiclass classification. Note that for the CMNIST dataset, we do not use an autoencoder for dimensionality reduction; rather, we directly perform k-means clustering followed by PID estimation due to the simplistic nature of this dataset.

\subsection{PID Estimator Analysis}
\begin{figure}[!htbp]
    \centering
    \includegraphics[width=0.6\linewidth]{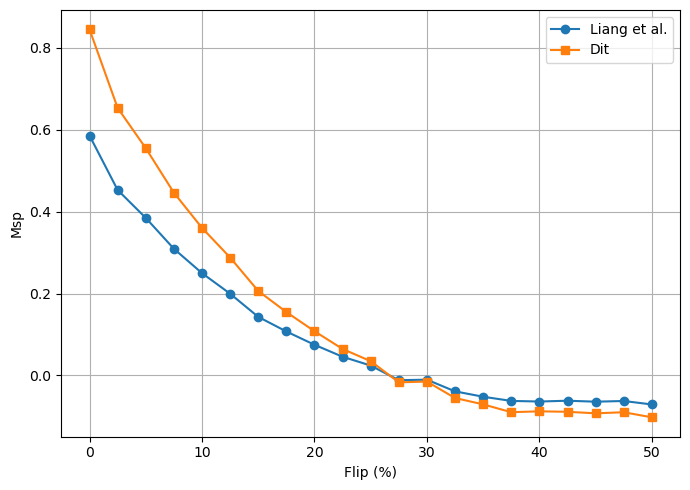}
    \caption{$M_{sp}$ for different PID estimators versus flip probability (\%)}
    \label{fig:pid_estimator}
\end{figure}

Fig. \ref{fig:pid_estimator} presents a systematic comparison of the two PID estimators for the binary classification task on the CMNIST dataset across multiple flip probabilities. We observe that the estimated spuriousness measure $M_{sp}$ is largely consistent across estimators, with both following the same trend. These results suggest that our spuriousness measure can tolerate variations in the choice of the PID estimator and support the reliability of our main findings.
%%%%%%%%%%%%%%%%%%%%%%%%%%%%%%%%%%%%%%%%%%%%%%%%%%%%%%%%%%%%%%%%%%%%%%%%%%%%%%%
%%%%%%%%%%%%%%%%%%%%%%%%%%%%%%%%%%%%%%%%%%%%%%%%%%%%%%%%%%%%%%%%%%%%%%%%%%%%%%%

\end{document}